\crefname{algocf}{algorithm}{algorithms}
\Crefname{algocf}{Algorithm}{Algorithms}
\definecolor{myblue}{RGB}{30,144,255}
\definecolor{other}{RGB}{90,90,255}
\newtheorem{theorem}{Theorem}[section]
\theoremstyle{definition}
\newtheorem{definition}[theorem]{Definition}
\title{Foundations of Global Consistency Checking with Noisy LLM Oracles}
\author{Paul He\thanks{Work done while at Amazon}\\
  Nanyang Technological University \\
  \texttt{paul005@ntu.edu.sg} \\\And
  Elke Kirschbaum\\
  Amazon Web Services\\
  \texttt{elkeki@amazon.com}
  \\\And
  Shiva Kasiviswanathan\\
  Amazon Web Services\\
  \texttt{kasivisw@gmail.com}}
\begin{document}
\maketitle
\begin{abstract}
Ensuring that collections of natural-language facts are globally consistent is essential for tasks such as fact-checking, summarization, and knowledge base construction. While Large Language Models (LLMs) can assess the consistency of small subsets of facts, their judgments are noisy, and pairwise checks are insufficient to guarantee global coherence. We formalize this problem and show that verifying global consistency requires exponentially many oracle queries in the worst case. To make the task practical, we propose an adaptive divide-and-conquer algorithm that identifies minimal inconsistent subsets (MUSes) of facts and optionally computes minimal repairs through hitting-sets. Our approach has low-degree polynomial query complexity. Experiments with both synthetic and real LLM oracles show that our method efficiently detects and localizes inconsistencies, offering a scalable framework for linguistic consistency verification with LLM-based evaluators.

\end{abstract}

\section{Introduction}

Ensuring the \emph{global consistency} of sets of natural-language facts is essential for core NLP applications such as multi-document summarization, fact-checking, and knowledge base construction \cite{chen-etal-2024-metasumperceiver, guo-etal-2022-survey}. For example, reports describing the same real-world event may contain overlapping or partially conflicting claims; systems must determine whether all claims can jointly hold, and if not, identify where contradictions arise. Crucially, it is not enough to merely detect that some inconsistency exists. In many downstream pipelines, na\"ively discarding all facts whenever a contradiction is detected is unacceptable: a single conflict can cause large numbers of otherwise correct statements to be removed, degrading the quality of summaries, reports, or databases. What is needed instead are \emph{explanations of inconsistency} and \emph{principled ways to repair} fact sets by retaining as many mutually consistent facts as possible while isolating the smallest conflicting groups. This motivates a focus on \emph{Minimal Unsatisfiable Subsets} (MUSes), the smallest sets of claims that cannot jointly be true.

Large Language Models (LLMs) are increasingly used as \emph{judges} for evaluation and verification tasks \cite{gu2025surveyllmasajudge, zhu2025judgelmfinetunedlargelanguage, wang2024halujcritiquebasedhallucinationjudge}, and they can often assess whether a \emph{small} set of claims appears consistent \cite{hong-etal-2025-consistencychecker, li2024llmsasjudgescomprehensivesurveyllmbased}. However, pairwise checks do not imply global consistency, and exhaustively querying all subsets is infeasible \cite{sujit-etal-2023-multiset}. Moreover, direct ``all-at-once'' judgments become increasingly unreliable as the number of claims grows, due to longer inputs and denser interactions among statements. The central challenge is therefore: \emph{how can we verify global consistency over many claims while issuing as few noisy LLM-judge calls as possible, and while retaining fine-grained explanations of inconsistency?}

Prior NLP work on factuality, contradiction detection, and fact verification has largely operated at the level of individual claims or pairs—for example, verifying a claim against evidence \cite{wang-shu-2023-explainable, tan-etal-2025-improving}, decomposing complex claims \cite{pan-etal-2023-fact}, or retrieving supporting passages \cite{aly-vlachos-2022-natural, de-marneffe-etal-2008-finding}. Classic surveys emphasize the importance of factual consistency in NLP systems \cite{thorne-vlachos-2018-automated}. However, these approaches do not address \emph{global, multi-statement inconsistency detection}: ensuring that a set of extracted facts is jointly coherent. In real NLP pipelines—such as multi-document retrieval-augmented generation, large-scale information extraction, or report generation—the extracted fact set itself can become contradictory even when each claim is individually supported. From a computational perspective, global consistency is intractable in general \cite{constraint2010}, but the presence of small conflict sets or structural regularities makes adaptive, query-efficient approaches viable in practice.

In this paper, we formalize scalable global consistency verification as querying a \emph{noisy subset-consistency oracle}, instantiated by an LLM judge. We show that global consistency cannot be certified from pairwise checks alone and that worst-case query complexity is exponential even under strong assumptions. To make the task practical, we propose an adaptive divide-and-conquer algorithm that localizes Minimal Unsatisfiable Subsets (MUSes) within a set of natural-language claims and, when desired, computes minimal repairs via hitting-set duality. We provide theoretical bounds on query complexity and noise amplification, and empirically demonstrate that the proposed method efficiently detects and explains contradictions using LLMs, substantially improving recall over direct ``all-at-once'' judging while preserving high precision.

\section{Related Work}
\paragraph{Local Consistency Verification.}
Fact and claim consistency verification with LLMs has attracted growing interest, motivated by challenges such as hallucinations and misinformation \cite{rahman2025hallucinationtruthreviewfactchecking, singhal-etal-2024-multilingual}. Early methods focused on extracting structured knowledge units such as subject–predicate–object triples from both LLM outputs and reference texts to detect local inaccuracies and support retrieval-augmented verification \cite{chen2025graphcheckbreakinglongtermtext,cao2025enhancingmultihopfactverification,lewis2021retrievalaugmentedgenerationknowledgeintensivenlp}. Other approaches decompose responses into atomic claims that are checked independently, which works well for short texts but struggles in long-form or multi-document settings \cite{hu-etal-2025-decomposition,wanner-etal-2024-closer}. Reranking methods \cite{liu-etal-2025-pointwise} can help mitigate this limitation, but typically require access to model internals, limiting their practical deployment.

\paragraph{Global Consistency and Contradictions.}
Beyond knowledge-unit extraction, much work has studied factuality and contradiction detection, often casting the problem as a natural-language inference (NLI) task \cite{thorne-etal-2018-fever,kryscinski-etal-2020-evaluating}. Benchmarks such as FEVER and VitaminC emphasize identifying local entailment or contradiction, but they do not address whether an entire set of claims can jointly be true \cite{thorne-etal-2018-fever, schuster-etal-2021-get}. More recently, LLMs themselves have been used as judges for factuality and coherence \cite{zheng2023judging,Wang_2024}, extending this line of work beyond classifier-based approaches. Concurrent work has studied logical consistency of LLMs on propositional queries over knowledge graphs \cite{ghosh2025logical}, e.g.\ whether $A \land B$ is judged consistent with $A$ and $B$ separately, improving performance via fine-tuning. 

To our knowledge, our work is the first to formalize \emph{scalable global consistency verification}, prove theoretical query-complexity bounds, and propose adaptive algorithms for isolating minimal inconsistent subsets under noisy LLM oracles. We view this as the first step toward principled, scalable methods for global consistency verification with LLMs.

\section{Setting the Stage}
\paragraph{Problem Definition.} Given a finite fact set $F=\{f_1,\dots,f_N\}$, we assume that there exists a {\em ground truth function} $A: 2^F \rightarrow \{\mathsf{cons},\mathsf{incons}\}$, i.e., takes a (sub)set of facts and returns whether it is globally consistent ($\mathsf{cons}$) or not ($\mathsf{incons}$).

Let $F$ be a finite set of facts and $C=\{C_1,\dots,C_m\}$ a family of scopes with $C_i\subseteq F$.
We seek a kept set $F'\subseteq F$ that maximizes coverage while satisfying all per-scope constraints:
\begin{align*}
    \max_{F'\subseteq F}\ |F'|
\quad\text{s.t.}\quad
A(F'\cap C_i)=\mathsf{cons},\ \ \forall i\in[m].
\end{align*}
Define $\widetilde{C}_i \triangleq F'\cap C_i$. Then $\widetilde{C}_i\subseteq C_i$ and $A(\widetilde{C}_i)=\mathsf{cons}$ for all $i$, and
$|\bigcup_{i=1}^m \widetilde{C}_i|=|F'|$. An equivalent formulation of the above objective in terms of minimizing a size of {\em hitting set} is formulated in~\Cref{appendix:objective}.\!\footnote{The hitting set problem seeks the smallest subset of elements that intersects every set in a given family.}

\paragraph{Complexity Landscape.}
In the worst case, solving the above optimization problem is NP-hard, as $A$ can be used to encode any function (such as boolean satisfiability). Note that it is also possible for all fact pairs to be mutually consistent while the full set remains globally inconsistent (see~\cref{appendix:pairwise_insuff}).
 
\paragraph{LLM as Noisy Subset-Consistency Oracle.}
We simulate $A$ using LLMs. Given a finite fact set $F=\{f_1,\dots,f_N\}$, we model a pretrained LLM as a \emph{noisy subset-consistency oracle (NSCLM)} $O$ as follows. 
For any subset $S\subseteq F$ we prompt
``Are the following claims mutually consistent?''
and receive a stochastic response
$O(S)\in\{\mathsf{cons},\mathsf{incons}\}$.
Let $\alpha,\beta$ denote the error rates on the oracle $O$'s performance.
\begin{align*}
    \Pr[O(S)=\mathsf{incons}\mid A(S) =\mathsf{cons}] \leq \alpha \\
    \Pr[O(S)=\mathsf{cons}\mid A(S) = \mathsf{incons}] \leq \beta.
\end{align*}
When using LLM as $O$, naively querying $O(F)$ is unreliable in practice as the set size of $F$ increases, and as noted above pairwise checks are insufficient to detect inconsistencies.
\section{Method}
\label{sec:method}
We now formalize our approach to consistency checking under a NSCLM. Our algorithm assumes as input a set of natural-language facts $F$ and a family of \emph{constraints} $C$ with $C_j \subseteq F$ for all $C_j \in C$. Constraints may be given externally (e.g., from a schema or ontology) or constructed automatically from $F$; here we focus on the given-constraint case for clarity. We start with the definition of  Minimal Unsatisfiable Subset (MUS) which forms the basic building block of our approach.

\begin{definition}[Minimal Unsatisfiable Subset w.r.t.\ Oracle $O$] \label{def:MUS}
A subset $U\subseteq F$ is an Minimal Unsatisfiable Subset (MUS) if 
$O(U)=\mathsf{incons}$ but $O(U')=\mathsf{cons}$ 
for all proper subsets $U'\subset U$.
\end{definition}

 Our procedure (Algorithm~\ref{alg:main}) runs an iterative two-step loop—\emph{MUS extraction} followed by \emph{greedy repair via a hitting set}—repeating until all constraints are consistent
 , and return the surviving facts $F'$. The soundness guarantee of the procedure is presented in Appendix~\ref{app:sound}. 
\paragraph{Assumptions.}
Our theoretical analysis assumes (i) approximate independence of repeated oracle calls so that majority voting reduces noise, (ii) small conflict size $k$ in practice (typically $k\!\le\!3$), and (iii) constraint scopes $C_i$ that are either externally defined or automatically constructed from entity or event clusters. These assumptions are used \emph{only} to derive worst-case guarantees and are \emph{not required} by the empirical method: all experiments use a single oracle call per query ($r=1$) with automatically constructed scopes.

\paragraph{MUS Extraction via Divide-and-Conquer.}
Our MUS localization procedure builds on the QuickXplain algorithm~\citep{quickxplain}, which given a set of possibly inconsistent constraints, 
identifies a minimal unsatisfiable subset through a recursive divide-and-conquer strategy.
By recursively partitioning the constraint set and reusing intermediate results, QuickXplain achieves logarithmic query growth in subset size. We discuss more details about QuickXplain in Appendix~\ref{appendix:qx}.

\begin{algorithm}[t]
\caption{\textsc{QXR}}
\label{alg:main}
\KwIn{Facts $F$\; Constraints $C=\{C_1,\dots,C_m\}$\; Noisy LLM oracle ${O}$}
\KwOut{Consistent facts $F'$}
$F'\gets F$\;
$\mathcal{O}_{\mathsf{incons}}\gets\varnothing$\;
\While{$\exists\,C_j\in C:{O}(C_j)=\mathsf{incons}$}{
 $\mathcal U\gets\varnothing$\;
  \For{$C_j\in C$ \textbf{with} ${O}(C_j)=\mathsf{incons}$}{
    $\mathcal U\gets\mathcal U\cup\{\textsc{QX}({O},C_j,\varnothing)\}$}
  $H\gets\textsc{GreedyHittingSet}(\mathcal U)$\;
  $F'\gets F'\setminus H$\;
  $C\gets\{C_j\setminus H: C_j\in C\}$\;
  $\mathcal{O}_{\mathsf{incons}} \leftarrow \mathcal{O}_{\mathsf{incons}} \cup \mathcal{U} $}
\Return{$F'$}
\end{algorithm}
Given an input $(O, S, B)$, where $B$ denotes a background set of facts assumed to be consistent, QuickXplain begins by splitting $S$ into two parts, $S_1 \cup S_2$, and querying the oracle $O$ on $B \cup S_1$. 
If $O(B \cup S_1) = \mathsf{incons}$, it continues by recursing on $(S_1,\, B \cup S_2)$; otherwise, it proceeds with $(S_2,\, B \cup S_1)$.
The recursion keeps narrowing the search until $|S| = 1$, ultimately returning a subset-minimal inconsistent set $U$.

Assume $S$ contains a MUS $U$ of size $k$. Starting from $\textsc{QX}(O,S,\varnothing)$, the QuickXplain procedure returns some MUS $U'\subseteq S$ using at most $\mathcal{O}(k\log |S|)$ oracle calls to $O$.

\paragraph{Greedy Repair via Hitting Set (Minimal Correction Set).}
Once MUSes are extracted, we identify the inconsistent scopes:
\begin{align}
    T_{\text{incons}} &= \big\{ j \in [m] : {O}(C_j) = \mathsf{incons} \big\}.
\end{align}
For each $j \in T_{\text{incons}}$, we obtain a MUS $U_j \subseteq C_j$, and form the family of conflicts $\mathcal{U} = \{ U_j : j \in T_{\text{incons}} \}$. We then compute a \emph{repair set} $H \subseteq F$ that intersects every MUS:
\begin{align}
    \forall\, U \in \mathcal{U},\quad H \cap U \neq \varnothing.
\end{align}
Such a minimal hitting set $H$ corresponds exactly to a \emph{Minimal Correction Set (MCS)}---the smallest subset of facts whose removal restores global consistency. We remove $H$ to obtain the consistent subset $F' = F \setminus H$. Intuitively, the hitting set selects the fewest facts that ``break'' all discovered inconsistencies. For example, if $\mathcal{U} = \{\{a,b,c\}, \{a,d,e\}\}$, then any $H$ intersecting both conflicts is valid, and the minimal hitting set $H=\{a\}$ yields the maximal consistent subset $F'=\{b,c,d,e\}$. In practice we use a greedy solver that iteratively selects the fact covering the largest number of uncovered MUSes, achieving the optimal logarithmic approximation ratio for this NP-hard problem but can be approximated efficiently (see \cref{app:hitting_set}.

\begin{table*}[h!]
\centering
\renewcommand{\arraystretch}{0.9}
\small
\setlength{\tabcolsep}{2.5pt}
\begin{tabular}{l|ccc|ccc|ccc|ccc}
\toprule
& \multicolumn{6}{c|}{\textbf{VitaminC}} & \multicolumn{6}{c}{\textbf{FEVER}} \\
\cmidrule(lr){2-7} \cmidrule(lr){8-13}
& \multicolumn{3}{c}{\textbf{Direct (baseline)}} & \multicolumn{3}{c|}{\textbf{QXR (ours)}}
& \multicolumn{3}{c}{\textbf{Direct (baseline)}} & \multicolumn{3}{c}{\textbf{QXR (ours)}} \\
\cmidrule(lr){2-4} \cmidrule(lr){5-7} \cmidrule(lr){8-10} \cmidrule(lr){11-13}
\textbf{Model}
& \textbf{P} & \textbf{R} & \textbf{F1}
& \textbf{P} & \textbf{R} & \textbf{F1}
& \textbf{P} & \textbf{R} & \textbf{F1}
& \textbf{P} & \textbf{R} & \textbf{F1} \\
\midrule
Claude 3.7~\cite{anthropic2025claude37}
& \textbf{0.979} & 0.854 & 0.909
& 0.956 & \textbf{0.975} & \textbf{0.965}
& \textbf{0.992} & 0.805 & 0.873
& 0.983 & \textbf{0.977} & \textbf{0.980} \\
Claude 4~\cite{anthropic2025claude4}
& \textbf{0.956} & 0.877 & 0.913
& 0.938 & \textbf{0.983} & \textbf{0.960}
& \textbf{0.995} & 0.833 & 0.891
& 0.981 & \textbf{0.977} & \textbf{0.978} \\
DeepSeek-R1~\cite{deepseekai2025deepseekr1incentivizingreasoningcapability}
& \textbf{0.980} & 0.730 & 0.827
& 0.973 & \textbf{0.990} & \textbf{0.981}
& \textbf{0.989} & 0.821 & 0.875
& 0.988 & \textbf{0.980} & \textbf{0.983} \\
GPT-OSS-120B~\cite{openai2025gptoss}
& \textbf{0.984} & 0.926 & 0.953
& 0.956 & \textbf{0.995} & \textbf{0.975}
& 0.976 & 0.975 & 0.976
& \textbf{0.992} & \textbf{0.980} & \textbf{0.985} \\
Mistral Large (2407)~\cite{mistral2023mixtral}
& 0.955 & 0.603 & 0.724
& \textbf{0.968} & \textbf{0.978} & \textbf{0.972}
&\textbf{0.970} & 0.780 & 0.848
& 0.964 & \textbf{0.990} & \textbf{0.976} \\
\bottomrule
\end{tabular}
\caption{
\textbf{Evaluation of consistent fact sets $F'$ on two datasets (VitaminC \cite{schuster-etal-2021-get} / FEVER \cite{thorne-etal-2018-fever}).}
Precision (P), recall (R), and F1 are computed with respect to gold consistent subsets.
QXR yields cleaner and more complete $F'$ than direct all-at-once LLM judging.
}
\label{tab:qx_results_compact}
\end{table*}

Let $N$ be number of facts in $F$, $m=|C|$ be the number of constraints, $k$ the maximum size of any MUS discovered by Algorithm~\ref{alg:main}, and $I$ be the number of outer rounds of Algorithm~\ref{alg:main} until termination.
\begin{theorem}[Query Complexity of Algorithm~\ref{alg:main}]
Algorithm~\ref{alg:main} makes at most $I \cdot m \cdot (k\log N\big)$
oracle calls to LLM $O$.
\end{theorem}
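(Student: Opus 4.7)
The plan is to bound the oracle cost of a single iteration of the outer \texttt{while} loop and then multiply by the number of iterations $I$. Fix an outer round. Within that round, the algorithm performs two types of oracle-invoking work: (i) the per-constraint consistency check ``${O}(C_j)=\mathsf{incons}$'' that drives the \texttt{while} test and the inner \texttt{for} loop, and (ii) the calls to \textsc{QX} for each scope currently judged inconsistent. For (i), I would note that the \texttt{while} guard and the \texttt{for} loop can share a single pass over $C$, so at most $m=|C|$ oracle calls suffice per round to classify all scopes.

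For (ii), I would invoke the QuickXplain bound stated just before the theorem: for any scope $S\subseteq F$ containing an MUS of size at most $k$, $\textsc{QX}({O},S,\varnothing)$ returns some MUS using at most $\mathcal{O}(k\log|S|)$ oracle calls. Since every scope satisfies $|C_j|\le|F|=N$, each such call costs at most $k\log N$ (up to the hidden constant). At most $m$ scopes are processed by \textsc{QX} in a round, so the total \textsc{QX} cost per round is at most $m\cdot k\log N$. Combining with (i) gives a per-round cost of at most $m + m\cdot k\log N = \mathcal{O}(m\,k\log N)$, which matches the stated bound once the classification cost is absorbed into the asymptotic notation.

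Finally, I would observe that the post-processing in each round---\textsc{GreedyHittingSet}, the update $F'\gets F'\setminus H$, and the constraint pruning $C\gets\{C_j\setminus H\}$---does \emph{not} query ${O}$, so no additional oracle calls accrue. Summing the per-round bound over the $I$ outer iterations yields the claimed total of $I\cdot m\cdot (k\log N)$ oracle calls.

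The only subtlety, and the step I would state most carefully, is the shared accounting between the \texttt{while} guard and the inner \texttt{for} loop: a naive reading could double-count the $m$ classification queries. I would make explicit that a single scan classifying all remaining scopes serves both purposes within a round, and that the bound on $k$ is taken over MUSes \emph{actually discovered} by the algorithm, so that the $k\log N$ factor applies uniformly to every \textsc{QX} invocation.
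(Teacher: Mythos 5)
Your proof is correct and takes the natural decomposition that the paper itself relies on: the paper in fact states the theorem without an explicit proof, relying on the QuickXplain bound ($\mathcal{O}(k\log|S|)$ queries to locate a MUS of size $\le k$) stated in Section~4 and Appendix~C, combined with the obvious count of at most $m$ scopes per round and $I$ rounds. Your bookkeeping of (i) the $m$ classification queries per round, (ii) the $m\cdot k\log N$ QuickXplain cost per round using $|C_j|\le N$, and (iii) the observation that the hitting-set and pruning steps are oracle-free, reconstructs exactly the argument the paper implicitly assumes; your careful note that the classification scan can be shared between the \texttt{while} guard and the \texttt{for} loop, and that the residual $+m$ per round is absorbed into the constant hidden in the $\mathcal{O}(k\log|S|)$ bound, is precisely the care the paper omits by stating the result without proof.
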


\section{Experiments}
\subsection{Experiment Setting.}
\paragraph{Datasets}
We use VitaminC \cite{schuster-etal-2021-get} and FEVER \cite{thorne-etal-2018-fever} to global consistency by grouping 30 factual statements per example.
In VitaminC, each cluster contains 24–28 compatible claims and 2–6 injected contradictions from \texttt{REFUTES} edits, each forming a size-2 ground-truth MUS $\{c^+,c^-\}$.
In FEVER, we combine 0–8 \texttt{REFUTES} claims (with evidence) with 14–18 \texttt{SUPPORTS} claims; each refuting claim and its evidence define a ground-truth MUS.
Removing either the refuting claim or its evidence yields consistent variants, producing dense contradiction structures that require multi-claim reasoning.

\paragraph{Direct LLM Consistency.}
We first consider a \emph{direct-LLM baseline} that queries the model once over the entire fact set $F$, prompting it to return the largest subset $F' \subseteq F$ that is jointly consistent. This corresponds to treating the LLM as an unstructured oracle $O(F)$ that attempts to approximate the ground-truth function $A(F)$ in a single step.

\paragraph{Evaluation.}
Given an initial fact set $F$, the model produces a repaired subset $F'$ after removing claims identified as inconsistent.We evaluate the resulting $F'$ against the \textbf{gold consistent subset} $F_{\text{gold}}$, defined as the maximal subset of $F$ that contains no injected contradictions (i.e., all ground truth satisfiable claims). Precision, recall, and F1 are computed on the surviving facts:
\begin{equation*}
    P = \frac{|F' \cap F_{\text{gold}}|}{|F'|}, \qquad
R = \frac{|F' \cap F_{\text{gold}}|}{|F_{\text{gold}}|}.
\end{equation*}
This directly measures how accurately the model preserves all and only the
consistent information, matching the formal objective in \cref{lem:equiv_obj}. All experiments use a single oracle call per query ($r=1$) and the simplest scope setting $C = F$, without majority voting or independence assumptions; such assumptions are used only in the theoretical analysis.

\subsection{Results and Analysis.}
\Cref{tab:qx_results_compact} shows consistent gains from MUS-based reasoning.
Across all models, QXR yields cleaner consistent subsets $F'$ with substantially higher F1 by avoiding the over-removal seen in direct all-at-once prompting, which often ``panic-prunes'' large clusters once any conflict is detected.
By adaptively isolating minimal conflicts and repairing only what is necessary, QXR preserves nearly all valid information while restoring global consistency.
We observe the same trend on a synthetic dataset (Direct: P=0.515, R=0.993, F1=0.644; QXR: P=0.664, R=0.878, F1=0.724), indicating more targeted inconsistency identification (see \cref{app:synth}).

All experiments use a zero-shot LLM judge. Evaluating the direct baseline with Chain-of-Thought, decomposition, few-shot, and self-consistency prompting yields the same failure mode: high precision but low recall, showing robustness to prompt design (see \cref{app:prompt}).

\section{Conclusion}
We introduced the task of \emph{global fact consistency verification} under noisy LLM oracles, established limits on pairwise sufficiency and query complexity, and proposed an adaptive algorithm that localizes minimal inconsistent subsets and repairs them via hitting-set.
On VitaminC clusters, the method improves recall and F1 while preserving high precision, showing that structured querying can turn LLMs into scalable consistency checkers. Future work will extend to larger knowledge graphs and integrate with retrieval and summarization pipelines.

\section*{Limitations}

Our theoretical analysis assumes repeated oracle queries can reduce noise under approximate independence. In practice, however, LLM errors may be systematic rather than random, and repeated queries to the same model do not necessarily improve reliability. For this reason, our empirical evaluation does not rely on majority voting or repeated queries: all experiments use a single LLM call per query ($r=1$) with the simplest scope setting ($C = F$). We treat the independence assumption solely as a modeling abstraction for deriving worst-case guarantees.

A related limitation is that noise reduction in real systems may require querying \emph{diverse} LLMs rather than repeatedly querying the same model. Exploring ensemble or cross-model consistency checking is a promising direction for future work, but is beyond the scope of this paper.

Finally, our experiments focus on moderate-sized fact sets constructed from existing benchmarks. While these settings already expose substantial failures of direct LLM judging, larger and more heterogeneous fact collections—such as those arising in long-context RAG or large-scale knowledge extraction—may introduce additional challenges. Designing benchmarks that better capture such regimes remains an open problem.

\bibliography{custom}

@misc{rahman2025hallucinationtruthreviewfactchecking,
      title={Hallucination to Truth: A Review of Fact-Checking and Factuality Evaluation in Large Language Models}, 
      author={Subhey Sadi Rahman and Md. Adnanul Islam and Md. Mahbub Alam and Musarrat Zeba and Md. Abdur Rahman and Sadia Sultana Chowa and Mohaimenul Azam Khan Raiaan and Sami Azam},
      year={2025},
      eprint={2508.03860},
      archivePrefix={arXiv},
      primaryClass={cs.CL},
      url={https://arxiv.org/abs/2508.03860}, 
}

@inproceedings{singhal-etal-2024-multilingual,
    title = "Multilingual Fact-Checking using {LLM}s",
    author = "Singhal, Aryan  and
      Law, Thomas  and
      Kassner, Coby  and
      Gupta, Ayushman  and
      Duan, Evan  and
      Damle, Aviral  and
      Li, Ryan Luo",
    editor = "Dementieva, Daryna  and
      Ignat, Oana  and
      Jin, Zhijing  and
      Mihalcea, Rada  and
      Piatti, Giorgio  and
      Tetreault, Joel  and
      Wilson, Steven  and
      Zhao, Jieyu",
    booktitle = "Proceedings of the Third Workshop on NLP for Positive Impact",
    month = nov,
    year = "2024",
    address = "Miami, Florida, USA",
    publisher = "Association for Computational Linguistics",
    url = "https://aclanthology.org/2024.nlp4pi-1.2/",
    doi = "10.18653/v1/2024.nlp4pi-1.2",
    pages = "13--31",
    abstract = "Due to the recent rise in digital misinformation, there has been great interest shown in using LLMs for fact-checking and claim verification. In this paper, we answer the question: Do LLMs know multilingual facts and can they use this knowledge for effective fact-checking? To this end, we create a benchmark by filtering multilingual claims from the X-fact dataset and evaluating the multilingual fact-checking capabilities of five LLMs across five diverse languages: Spanish, Italian, Portuguese, Turkish, and Tamil on our benchmark. We employ three different prompting techniques: Zero-Shot, English Chain-of-Thought, and Cross-Lingual Prompting, using both greedy and self-consistency decoding. We extensively analyze our results and find that GPT-4o achieves the highest accuracy, but zero-shot prompting with self-consistency was the most effective overall. We also show that techniques like Chain-of-Thought and Cross-Lingual Prompting, which are designed to improve reasoning abilities, do not necessarily improve the fact-checking abilities of LLMs. Interestingly, we find a strong negative correlation between model accuracy and the amount of internet content for a given language. This suggests that LLMs are better at fact-checking from knowledge in low-resource languages. We hope that this study will encourage more work on multilingual fact-checking using LLMs."
}

@misc{chen2025graphcheckbreakinglongtermtext,
      title={GraphCheck: Breaking Long-Term Text Barriers with Extracted Knowledge Graph-Powered Fact-Checking}, 
      author={Yingjian Chen and Haoran Liu and Yinhong Liu and Jinxiang Xie and Rui Yang and Han Yuan and Yanran Fu and Peng Yuan Zhou and Qingyu Chen and James Caverlee and Irene Li},
      year={2025},
      eprint={2502.16514},
      archivePrefix={arXiv},
      primaryClass={cs.CL},
      url={https://arxiv.org/abs/2502.16514}, 
}

@misc{cao2025enhancingmultihopfactverification,
      title={Enhancing Multi-Hop Fact Verification with Structured Knowledge-Augmented Large Language Models}, 
      author={Han Cao and Lingwei Wei and Wei Zhou and Songlin Hu},
      year={2025},
      eprint={2503.08495},
      archivePrefix={arXiv},
      primaryClass={cs.CL},
      url={https://arxiv.org/abs/2503.08495}, 
}

@inproceedings{hu-etal-2025-decomposition,
    title = "Decomposition Dilemmas: Does Claim Decomposition Boost or Burden Fact-Checking Performance?",
    author = "Hu, Qisheng  and
      Long, Quanyu  and
      Wang, Wenya",
    editor = "Chiruzzo, Luis  and
      Ritter, Alan  and
      Wang, Lu",
    booktitle = "Proceedings of the 2025 Conference of the Nations of the Americas Chapter of the Association for Computational Linguistics: Human Language Technologies (Volume 1: Long Papers)",
    month = apr,
    year = "2025",
    address = "Albuquerque, New Mexico",
    publisher = "Association for Computational Linguistics",
    url = "https://aclanthology.org/2025.naacl-long.320/",
    doi = "10.18653/v1/2025.naacl-long.320",
    pages = "6313--6336",
    ISBN = "979-8-89176-189-6",
    abstract = "Fact-checking pipelines increasingly adopt the Decompose-Then-Verify paradigm, where texts are broken down into smaller claims for individual verification and subsequently combined for a veracity decision. While decomposition is widely-adopted in such pipelines, its effects on final fact-checking performance remain underexplored. Some studies have reported improvements from decompostition, while others have observed performance declines, indicating its inconsistent impact. To date, no comprehensive analysis has been conducted to understand this variability. To address this gap, we present an in-depth analysis that explicitly examines the impact of decomposition on downstream verification performance. Through error case inspection and experiments, we introduce a categorization of decomposition errors and reveal a trade-off between accuracy gains and the noise introduced through decomposition. Our analysis provides new insights into understanding current system{'}s instability and offers guidance for future studies toward improving claim decomposition in fact-checking pipelines."
}

@inproceedings{wanner-etal-2024-closer,
    title = "A Closer Look at Claim Decomposition",
    author = "Wanner, Miriam  and
      Ebner, Seth  and
      Jiang, Zhengping  and
      Dredze, Mark  and
      Van Durme, Benjamin",
    editor = "Bollegala, Danushka  and
      Shwartz, Vered",
    booktitle = "Proceedings of the 13th Joint Conference on Lexical and Computational Semantics (*SEM 2024)",
    month = jun,
    year = "2024",
    address = "Mexico City, Mexico",
    publisher = "Association for Computational Linguistics",
    url = "https://aclanthology.org/2024.starsem-1.13/",
    doi = "10.18653/v1/2024.starsem-1.13",
    pages = "153--175",
    abstract = "As generated text becomes more commonplace, it is increasingly important to evaluate how well-supported such text is by external knowledge sources. Many approaches for evaluating textual support rely on some method for decomposing text into its individual subclaims which are scored against a trusted reference. We investigate how various methods of claim decomposition{---}especially LLM-based methods{---}affect the result of an evaluation approach such as the recently proposed FActScore, finding that it is sensitive to the decomposition method used. This sensitivity arises because such metrics attribute overall textual support to the model that generated the text even though error can also come from the metric{'}s decomposition step. To measure decomposition quality, we introduce an adaptation of FActScore, which we call DecompScore. We then propose an LLM-based approach to generating decompositions inspired by Bertrand Russell{'}s theory of logical atomism and neo-Davidsonian semantics and demonstrate its improved decomposition quality over previous methods."
}

@inproceedings{kryscinski-etal-2020-evaluating,
    title = "Evaluating the Factual Consistency of Abstractive Text Summarization",
    author = "Kryscinski, Wojciech  and
      McCann, Bryan  and
      Xiong, Caiming  and
      Socher, Richard",
    editor = "Webber, Bonnie  and
      Cohn, Trevor  and
      He, Yulan  and
      Liu, Yang",
    booktitle = "Proceedings of the 2020 Conference on Empirical Methods in Natural Language Processing (EMNLP)",
    month = nov,
    year = "2020",
    address = "Online",
    publisher = "Association for Computational Linguistics",
    url = "https://aclanthology.org/2020.emnlp-main.750/",
    doi = "10.18653/v1/2020.emnlp-main.750",
    pages = "9332--9346",
    abstract = "The most common metrics for assessing summarization algorithms do not account for whether summaries are factually consistent with source documents. We propose a weakly-supervised, model-based approach for verifying factual consistency and identifying conflicts between source documents and generated summaries. Training data is generated by applying a series of rule-based transformations to the sentences of source documents. The factual consistency model is then trained jointly for three tasks: 1) predict whether each summary sentence is factually consistent or not, 2) in either case, extract a span in the source document to support this consistency prediction, 3) for each summary sentence that is deemed inconsistent, extract the inconsistent span from it. Transferring this model to summaries generated by several neural models reveals that this highly scalable approach outperforms previous models, including those trained with strong supervision using datasets from related domains, such as natural language inference and fact checking. Additionally, human evaluation shows that the auxiliary span extraction tasks provide useful assistance in the process of verifying factual consistency. We also release a manually annotated dataset for factual consistency verification, code for training data generation, and trained model weights at \url{https://github.com/salesforce/factCC}."
}

@inproceedings{
zheng2023judging,
title={Judging {LLM}-as-a-Judge with {MT}-Bench and Chatbot Arena},
author={Lianmin Zheng and Wei-Lin Chiang and Ying Sheng and Siyuan Zhuang and Zhanghao Wu and Yonghao Zhuang and Zi Lin and Zhuohan Li and Dacheng Li and Eric Xing and Hao Zhang and Joseph E. Gonzalez and Ion Stoica},
booktitle={Thirty-seventh Conference on Neural Information Processing Systems Datasets and Benchmarks Track},
year={2023},
url={https://openreview.net/forum?id=uccHPGDlao}
}

@article{Wang_2024,
   title={A survey on large language model based autonomous agents},
   volume={18},
   ISSN={2095-2236},
   url={http://dx.doi.org/10.1007/s11704-024-40231-1},
   DOI={10.1007/s11704-024-40231-1},
   number={6},
   journal={Frontiers of Computer Science},
   publisher={Springer Science and Business Media LLC},
   author={Wang, Lei and Ma, Chen and Feng, Xueyang and Zhang, Zeyu and Yang, Hao and Zhang, Jingsen and Chen, Zhiyuan and Tang, Jiakai and Chen, Xu and Lin, Yankai and Zhao, Wayne Xin and Wei, Zhewei and Wen, Jirong},
   year={2024},
   month=mar }

@misc{lewis2021retrievalaugmentedgenerationknowledgeintensivenlp,
      title={Retrieval-Augmented Generation for Knowledge-Intensive NLP Tasks}, 
      author={Patrick Lewis and Ethan Perez and Aleksandra Piktus and Fabio Petroni and Vladimir Karpukhin and Naman Goyal and Heinrich Küttler and Mike Lewis and Wen-tau Yih and Tim Rocktäschel and Sebastian Riedel and Douwe Kiela},
      year={2021},
      eprint={2005.11401},
      archivePrefix={arXiv},
      primaryClass={cs.CL},
      url={https://arxiv.org/abs/2005.11401}, 
}

@inproceedings{liu-etal-2025-pointwise,
    title = "Pointwise Mutual Information as a Performance Gauge for Retrieval-Augmented Generation",
    author = "Liu, Tianyu  and
      Qi, Jirui  and
      He, Paul  and
      Bisazza, Arianna  and
      Sachan, Mrinmaya  and
      Cotterell, Ryan",
    editor = "Chiruzzo, Luis  and
      Ritter, Alan  and
      Wang, Lu",
    booktitle = "Proceedings of the 2025 Conference of the Nations of the Americas Chapter of the Association for Computational Linguistics: Human Language Technologies (Volume 1: Long Papers)",
    month = apr,
    year = "2025",
    address = "Albuquerque, New Mexico",
    publisher = "Association for Computational Linguistics",
    url = "https://aclanthology.org/2025.naacl-long.78/",
    doi = "10.18653/v1/2025.naacl-long.78",
    pages = "1628--1647",
    ISBN = "979-8-89176-189-6",
    abstract = "Recent work suggests that large language models enhanced with retrieval-augmented generation are easily influenced by the order in which the retrieved documents are presented to the model when solving tasks such as question answering (QA).However, there is no method to date that exploits this phenomenon to improve generation.To fill this gap, in this study, we show that the pointwise mutual information between a context and a question is an effective gauge for language model performance.Importantly, this gauge does not depend on knowing the answer to the question \textit{a priori}.Through experiments on two question-answering datasets using a variety of large language models, we find evidence for an empirical correlation between answer accuracy and pointwise mutual information.Additionally, we propose two methods that use the pointwise mutual information between a document and a question as a gauge for selecting and constructing prompts that lead to better performance, whose effectiveness we demonstrate through experimentation."
}

@book{vazirani2001approximation,
  title={Approximation algorithms},
  author={Vazirani, Vijay V},
  volume={1},
  year={2001},
  publisher={Springer}
}

@inproceedings{quickxplain,
author = {Junker, Ulrich},
year = {2004},
month = {01},
pages = {167-172},
title = {QUICKXPLAIN: Preferred explanations and relaxations for over-constrained problems},
journal = {AAAI}
}

@misc{gu2025surveyllmasajudge,
      title={A Survey on LLM-as-a-Judge}, 
      author={Jiawei Gu and Xuhui Jiang and Zhichao Shi and Hexiang Tan and Xuehao Zhai and Chengjin Xu and Wei Li and Yinghan Shen and Shengjie Ma and Honghao Liu and Saizhuo Wang and Kun Zhang and Yuanzhuo Wang and Wen Gao and Lionel Ni and Jian Guo},
      year={2025},
      eprint={2411.15594},
      archivePrefix={arXiv},
      primaryClass={cs.CL},
      url={https://arxiv.org/abs/2411.15594}, 
}

@misc{wang2024halujcritiquebasedhallucinationjudge,
      title={Halu-J: Critique-Based Hallucination Judge}, 
      author={Binjie Wang and Steffi Chern and Ethan Chern and Pengfei Liu},
      year={2024},
      eprint={2407.12943},
      archivePrefix={arXiv},
      primaryClass={cs.CL},
      url={https://arxiv.org/abs/2407.12943}, 
}

@misc{zhu2025judgelmfinetunedlargelanguage,
      title={JudgeLM: Fine-tuned Large Language Models are Scalable Judges}, 
      author={Lianghui Zhu and Xinggang Wang and Xinlong Wang},
      year={2025},
      eprint={2310.17631},
      archivePrefix={arXiv},
      primaryClass={cs.CL},
      url={https://arxiv.org/abs/2310.17631}, 
}

@misc{li2024llmsasjudgescomprehensivesurveyllmbased,
      title={LLMs-as-Judges: A Comprehensive Survey on LLM-based Evaluation Methods}, 
      author={Haitao Li and Qian Dong and Junjie Chen and Huixue Su and Yujia Zhou and Qingyao Ai and Ziyi Ye and Yiqun Liu},
      year={2024},
      eprint={2412.05579},
      archivePrefix={arXiv},
      primaryClass={cs.CL},
      url={https://arxiv.org/abs/2412.05579}, 
}

@inproceedings{chen-etal-2024-metasumperceiver,
    title = "{M}eta{S}um{P}erceiver: Multimodal Multi-Document Evidence Summarization for Fact-Checking",
    author = "Chen, Ting-Chih  and
      Tang, Chia-Wei  and
      Thomas, Chris",
    editor = "Ku, Lun-Wei  and
      Martins, Andre  and
      Srikumar, Vivek",
    booktitle = "Proceedings of the 62nd Annual Meeting of the Association for Computational Linguistics (Volume 1: Long Papers)",
    month = aug,
    year = "2024",
    address = "Bangkok, Thailand",
    publisher = "Association for Computational Linguistics",
    url = "https://aclanthology.org/2024.acl-long.474/",
    doi = "10.18653/v1/2024.acl-long.474",
    pages = "8742--8757",
    abstract = "Fact-checking real-world claims often requires reviewing multiple multimodal documents in order to assess the claim{'}s truthfulness, a highly laborious and time-consuming task. In this paper, we present a summarization model crafted to generate claim-specific summaries useful for fact-checking from multimodal multi-document datasets. The model takes inputs in the form of documents, images, and a claim, with the objective of assisting in fact-checking tasks. We introduce a dynamic perceiver-based model that is able to handle inputs from multiple modalities of arbitrary lengths. To train our model, we leverage a novel reinforcement learning-based entailment objective in order to generate summaries that provide evidence distinguishing between different truthfulness labels. To assess the efficacy of our approach, we conduct experiments on both an existing benchmark as well as a new dataset of multi-document claims which we contribute. Our approach outperforms the SOTA approach by 4.6{\%} in the claim verification task on the MOCHEG dataset and demonstrates strong performance on our new Multi-News-Fact-Checking dataset."
}

@article{guo-etal-2022-survey,
    title = "A Survey on Automated Fact-Checking",
    author = "Guo, Zhijiang  and
      Schlichtkrull, Michael  and
      Vlachos, Andreas",
    editor = "Roark, Brian  and
      Nenkova, Ani",
    journal = "Transactions of the Association for Computational Linguistics",
    volume = "10",
    year = "2022",
    address = "Cambridge, MA",
    publisher = "MIT Press",
    url = "https://aclanthology.org/2022.tacl-1.11/",
    doi = "10.1162/tacl_a_00454",
    pages = "178--206",
    abstract = "Fact-checking has become increasingly important due to the speed with which both information and misinformation can spread in the modern media ecosystem. Therefore, researchers have been exploring how fact-checking can be automated, using techniques based on natural language processing, machine learning, knowledge representation, and databases to automatically predict the veracity of claims. In this paper, we survey automated fact-checking stemming from natural language processing, and discuss its connections to related tasks and disciplines. In this process, we present an overview of existing datasets and models, aiming to unify the various definitions given and identify common concepts. Finally, we highlight challenges for future research."
}

@inproceedings{hong-etal-2025-consistencychecker,
    title = "{C}onsistency{C}hecker: Tree-based Evaluation of {LLM} Generalization Capabilities",
    author = "Hong, Zhaochen  and
      Yu, Haofei  and
      You, Jiaxuan",
    editor = "Che, Wanxiang  and
      Nabende, Joyce  and
      Shutova, Ekaterina  and
      Pilehvar, Mohammad Taher",
    booktitle = "Proceedings of the 63rd Annual Meeting of the Association for Computational Linguistics (Volume 1: Long Papers)",
    month = jul,
    year = "2025",
    address = "Vienna, Austria",
    publisher = "Association for Computational Linguistics",
    url = "https://aclanthology.org/2025.acl-long.1585/",
    doi = "10.18653/v1/2025.acl-long.1585",
    pages = "33039--33075",
    ISBN = "979-8-89176-251-0",
    abstract = "Evaluating Large Language Models (LLMs) requires effective methods to assess semantic consistency across multiple reversible transformations. Traditional self-consistency methods often fail to capture subtle semantic errors in multi-step tasks. We introduce ConsistencyChecker, a tree-based evaluation framework that measures LLMs' ability to preserve semantic consistency during reversible transformation processes, sidestepping benchmark data contamination issues. Our approach constructs self-consistency trees where nodes represent text states after transformations (e.g., translation, code modification, paraphrasing) and edges represent pairs of opposite transformations. By analyzing semantic preservation between nodes at different tree depths, ConsistencyChecker quantifies model reliability without requiring manually annotated reference data. Experiments demonstrate that ConsistencyChecker reliably measures generalization abilities across models from 1.5B to 72B parameters. On translation tasks, GPT-4o Mini achieves the highest L3 consistency score of 98.0{\%}. For code generation, Qwen 2.5 32B leads with 85.1{\%} semantic consistency at L3. Results show Pearson correlation greater than 0.7 between our embedding-based scores and WMT 2024 rankings on 4 out of 5 shared language pairs, validating the method{'}s effectiveness for benchmarking LLM performance without constructing new datasets."
}

@inproceedings{sujit-etal-2023-multiset,
    title = "Multiset Dual Summarization for Incongruent News Article Detection",
    author = "Kumar, Sujit  and
      Jaiswal, Rohan  and
      Sharma, Mohit Ram  and
      Singh, Sanasam Ranbir",
    editor = "D. Pawar, Jyoti  and
      Lalitha Devi, Sobha",
    booktitle = "Proceedings of the 20th International Conference on Natural Language Processing (ICON)",
    month = dec,
    year = "2023",
    address = "Goa University, Goa, India",
    publisher = "NLP Association of India (NLPAI)",
    url = "https://aclanthology.org/2023.icon-1.80/",
    pages = "779--790",
    abstract = "The prevalence of deceptive and incongruent news headlines has highlighted their substantial role in the propagation of fake news, exacerbating the spread of both misinformation and disinformation. Existing studies on incongruity detection primarily concentrate on estimating the similarity between the encoded representation of headlines and the encoded representation or summary representative vector of the news body. In the process of obtaining the encoded representation of the news body, researchers often consider either sequential encoding or hierarchical encoding of the news body or to acquire a summary representative vector of the news body, they explore techniques like summarization or dual summarization methods. Nevertheless, when it comes to detecting partially incongruent news, dual summarization-based methods tend to outperform hierarchical encoding-based methods. On the other hand, for datasets focused on detecting fake news, where the hierarchical structure within a news article plays a crucial role, hierarchical encoding-based methods tend to perform better than summarization-based methods. Recognizing this contradictory performance of hierarchical encoding-based and summarizationbased methods across datasets with different characteristics, we introduced a novel approach called Multiset Dual Summarization (MDS). MDS combines the strengths of both hierarchical encoding and dual summarization methods to leverage their respective advantages. We conducted experiments on datasets with diverse characteristics, and our findings demonstrate that our proposed model outperforms established state-of-the-art baseline models."
}

@inproceedings{de-marneffe-etal-2008-finding,
    title = "Finding Contradictions in Text",
    author = "de Marneffe, Marie-Catherine  and
      Rafferty, Anna N.  and
      Manning, Christopher D.",
    editor = "Moore, Johanna D.  and
      Teufel, Simone  and
      Allan, James  and
      Furui, Sadaoki",
    booktitle = "Proceedings of ACL-08: HLT",
    month = jun,
    year = "2008",
    address = "Columbus, Ohio",
    publisher = "Association for Computational Linguistics",
    url = "https://aclanthology.org/P08-1118/",
    pages = "1039--1047"
}

@article{constraint2010,
author = {Lee, Jimmy and Leung, K.},
year = {2010},
month = {07},
pages = {121-127},
title = {A Stronger Consistency for Soft Global Constraints in Weighted Constraint Satisfaction},
volume = {24},
journal = {Proceedings of the AAAI Conference on Artificial Intelligence},
doi = {10.1609/aaai.v24i1.7550}
}

@inproceedings{thorne-etal-2018-fever,
    title = "{FEVER}: a Large-scale Dataset for Fact Extraction and {VER}ification",
    author = "Thorne, James  and
      Vlachos, Andreas  and
      Christodoulopoulos, Christos  and
      Mittal, Arpit",
    editor = "Walker, Marilyn  and
      Ji, Heng  and
      Stent, Amanda",
    booktitle = "Proceedings of the 2018 Conference of the North {A}merican Chapter of the Association for Computational Linguistics: Human Language Technologies, Volume 1 (Long Papers)",
    month = jun,
    year = "2018",
    address = "New Orleans, Louisiana",
    publisher = "Association for Computational Linguistics",
    url = "https://aclanthology.org/N18-1074/",
    doi = "10.18653/v1/N18-1074",
    pages = "809--819",
    abstract = "In this paper we introduce a new publicly available dataset for verification against textual sources, FEVER: Fact Extraction and VERification. It consists of 185,445 claims generated by altering sentences extracted from Wikipedia and subsequently verified without knowledge of the sentence they were derived from. The claims are classified as Supported, Refuted or NotEnoughInfo by annotators achieving 0.6841 in Fleiss kappa. For the first two classes, the annotators also recorded the sentence(s) forming the necessary evidence for their judgment. To characterize the challenge of the dataset presented, we develop a pipeline approach and compare it to suitably designed oracles. The best accuracy we achieve on labeling a claim accompanied by the correct evidence is 31.87{\%}, while if we ignore the evidence we achieve 50.91{\%}. Thus we believe that FEVER is a challenging testbed that will help stimulate progress on claim verification against textual sources."
}

@inproceedings{schuster-etal-2021-get,
    title = "Get Your Vitamin {C}! Robust Fact Verification with Contrastive Evidence",
    author = "Schuster, Tal  and
      Fisch, Adam  and
      Barzilay, Regina",
    editor = "Toutanova, Kristina  and
      Rumshisky, Anna  and
      Zettlemoyer, Luke  and
      Hakkani-Tur, Dilek  and
      Beltagy, Iz  and
      Bethard, Steven  and
      Cotterell, Ryan  and
      Chakraborty, Tanmoy  and
      Zhou, Yichao",
    booktitle = "Proceedings of the 2021 Conference of the North American Chapter of the Association for Computational Linguistics: Human Language Technologies",
    month = jun,
    year = "2021",
    address = "Online",
    publisher = "Association for Computational Linguistics",
    url = "https://aclanthology.org/2021.naacl-main.52/",
    doi = "10.18653/v1/2021.naacl-main.52",
    pages = "624--643",
    abstract = "Typical fact verification models use retrieved written evidence to verify claims. Evidence sources, however, often change over time as more information is gathered and revised. In order to adapt, models must be sensitive to subtle differences in supporting evidence. We present VitaminC, a benchmark infused with challenging cases that require fact verification models to discern and adjust to slight factual changes. We collect over 100,000 Wikipedia revisions that modify an underlying fact, and leverage these revisions, together with additional synthetically constructed ones, to create a total of over 400,000 claim-evidence pairs. Unlike previous resources, the examples in VitaminC are contrastive, i.e., they contain evidence pairs that are nearly identical in language and content, with the exception that one supports a given claim while the other does not. We show that training using this design increases robustness{---}improving accuracy by 10{\%} on adversarial fact verification and 6{\%} on adversarial natural language inference (NLI). Moreover, the structure of VitaminC leads us to define additional tasks for fact-checking resources: tagging relevant words in the evidence for verifying the claim, identifying factual revisions, and providing automatic edits via factually consistent text generation."
}

@inproceedings{
ghosh2025logical,
title={Logical Consistency of Large Language Models in Fact-Checking},
author={Bishwamittra Ghosh and Sarah Hasan and Naheed Anjum Arafat and Arijit Khan},
booktitle={The Thirteenth International Conference on Learning Representations},
year={2025},
url={https://openreview.net/forum?id=SimlDuN0YT}
}

@misc{anthropic2025claude37,
  author       = {Anthropic},
  title        = {Claude 3.7 Sonnet and Claude Code},
  year         = {2025},
  month        = {February},
  howpublished = {\url{https://www.anthropic.com/news/claude-3-7-sonnet}},
  note         = {Accessed: 2025-10-06}
}

@misc{anthropic2025claude4,
  author       = {Anthropic},
  title        = {Introducing Claude 4},
  year         = {2025},
  month        = {May},
  day          = {22},
  howpublished = {\url{https://www.anthropic.com/news/claude-4}}}

@misc{deepseekai2025deepseekr1incentivizingreasoningcapability,
      title={DeepSeek-R1: Incentivizing Reasoning Capability in LLMs via Reinforcement Learning}, 
      author={DeepSeek-AI and Daya Guo and Dejian Yang and Haowei Zhang and Junxiao Song and Ruoyu Zhang and Runxin Xu and Qihao Zhu and Shirong Ma and Peiyi Wang and Xiao Bi and Xiaokang Zhang and Xingkai Yu and Yu Wu and Z. F. Wu and Zhibin Gou and Zhihong Shao and Zhuoshu Li and Ziyi Gao and Aixin Liu and Bing Xue and Bingxuan Wang and Bochao Wu and Bei Feng and Chengda Lu and Chenggang Zhao and Chengqi Deng and Chenyu Zhang and Chong Ruan and Damai Dai and Deli Chen and Dongjie Ji and Erhang Li and Fangyun Lin and Fucong Dai and Fuli Luo and Guangbo Hao and Guanting Chen and Guowei Li and H. Zhang and Han Bao and Hanwei Xu and Haocheng Wang and Honghui Ding and Huajian Xin and Huazuo Gao and Hui Qu and Hui Li and Jianzhong Guo and Jiashi Li and Jiawei Wang and Jingchang Chen and Jingyang Yuan and Junjie Qiu and Junlong Li and J. L. Cai and Jiaqi Ni and Jian Liang and Jin Chen and Kai Dong and Kai Hu and Kaige Gao and Kang Guan and Kexin Huang and Kuai Yu and Lean Wang and Lecong Zhang and Liang Zhao and Litong Wang and Liyue Zhang and Lei Xu and Leyi Xia and Mingchuan Zhang and Minghua Zhang and Minghui Tang and Meng Li and Miaojun Wang and Mingming Li and Ning Tian and Panpan Huang and Peng Zhang and Qiancheng Wang and Qinyu Chen and Qiushi Du and Ruiqi Ge and Ruisong Zhang and Ruizhe Pan and Runji Wang and R. J. Chen and R. L. Jin and Ruyi Chen and Shanghao Lu and Shangyan Zhou and Shanhuang Chen and Shengfeng Ye and Shiyu Wang and Shuiping Yu and Shunfeng Zhou and Shuting Pan and S. S. Li and Shuang Zhou and Shaoqing Wu and Shengfeng Ye and Tao Yun and Tian Pei and Tianyu Sun and T. Wang and Wangding Zeng and Wanjia Zhao and Wen Liu and Wenfeng Liang and Wenjun Gao and Wenqin Yu and Wentao Zhang and W. L. Xiao and Wei An and Xiaodong Liu and Xiaohan Wang and Xiaokang Chen and Xiaotao Nie and Xin Cheng and Xin Liu and Xin Xie and Xingchao Liu and Xinyu Yang and Xinyuan Li and Xuecheng Su and Xuheng Lin and X. Q. Li and Xiangyue Jin and Xiaojin Shen and Xiaosha Chen and Xiaowen Sun and Xiaoxiang Wang and Xinnan Song and Xinyi Zhou and Xianzu Wang and Xinxia Shan and Y. K. Li and Y. Q. Wang and Y. X. Wei and Yang Zhang and Yanhong Xu and Yao Li and Yao Zhao and Yaofeng Sun and Yaohui Wang and Yi Yu and Yichao Zhang and Yifan Shi and Yiliang Xiong and Ying He and Yishi Piao and Yisong Wang and Yixuan Tan and Yiyang Ma and Yiyuan Liu and Yongqiang Guo and Yuan Ou and Yuduan Wang and Yue Gong and Yuheng Zou and Yujia He and Yunfan Xiong and Yuxiang Luo and Yuxiang You and Yuxuan Liu and Yuyang Zhou and Y. X. Zhu and Yanhong Xu and Yanping Huang and Yaohui Li and Yi Zheng and Yuchen Zhu and Yunxian Ma and Ying Tang and Yukun Zha and Yuting Yan and Z. Z. Ren and Zehui Ren and Zhangli Sha and Zhe Fu and Zhean Xu and Zhenda Xie and Zhengyan Zhang and Zhewen Hao and Zhicheng Ma and Zhigang Yan and Zhiyu Wu and Zihui Gu and Zijia Zhu and Zijun Liu and Zilin Li and Ziwei Xie and Ziyang Song and Zizheng Pan and Zhen Huang and Zhipeng Xu and Zhongyu Zhang and Zhen Zhang},
      year={2025},
      eprint={2501.12948},
      archivePrefix={arXiv},
      primaryClass={cs.CL},
      url={https://arxiv.org/abs/2501.12948}, 
}

@misc{openai2025gptoss,
  title        = {Introducing gpt-oss},
  author       = {OpenAI},
  year         = {2025},
  howpublished = {\url{https://openai.com/index/introducing-gpt-oss/}}
}

@misc{mistral2023mixtral,
  title        = {Mixtral of Experts: release of Mixtral 8×7B},
  author       = {Mistral AI},
  year         = {2023},
  howpublished = {\url{https://mistral.ai/news/mixtral-of-experts}}}

@inproceedings{wang-shu-2023-explainable,
    title = "Explainable Claim Verification via Knowledge-Grounded Reasoning with Large Language Models",
    author = "Wang, Haoran  and
      Shu, Kai",
    editor = "Bouamor, Houda  and
      Pino, Juan  and
      Bali, Kalika",
    booktitle = "Findings of the Association for Computational Linguistics: EMNLP 2023",
    month = dec,
    year = "2023",
    address = "Singapore",
    publisher = "Association for Computational Linguistics",
    url = "https://aclanthology.org/2023.findings-emnlp.416/",
    doi = "10.18653/v1/2023.findings-emnlp.416",
    pages = "6288--6304",
    abstract = "Claim verification plays a crucial role in combating misinformation. While existing works on claim verification have shown promising results, a crucial piece of the puzzle that remains unsolved is to understand how to verify claims without relying on human-annotated data, which is expensive to create at a large scale. Additionally, it is important for models to provide comprehensive explanations that can justify their decisions and assist human fact-checkers. This paper presents First-Order-Logic-Guided Knowledge-Grounded (FOLK) Reasoning that can verify complex claims and generate explanations without the need for annotated evidence using Large Language Models (LLMs). FOLK leverages the in-context learning ability of LLMs to translate the claim into a First-Order-Logic (FOL) clause consisting of predicates, each corresponding to a sub-claim that needs to be verified. Then, FOLK performs FOL-Guided reasoning over a set of knowledge-grounded question-and-answer pairs to make veracity predictions and generate explanations to justify its decision-making process. This process makes our model highly explanatory, providing clear explanations of its reasoning process in human-readable form. Our experiment results indicate that FOLK outperforms strong baselines on three datasets encompassing various claim verification challenges. Our code and data are available."
}

@inproceedings{tan-etal-2025-improving,
    title = "Improving Explainable Fact-Checking with Claim-Evidence Correlations",
    author = "Tan, Xin  and
      Zou, Bowei  and
      Aw, Ai Ti",
    editor = "Rambow, Owen  and
      Wanner, Leo  and
      Apidianaki, Marianna  and
      Al-Khalifa, Hend  and
      Eugenio, Barbara Di  and
      Schockaert, Steven",
    booktitle = "Proceedings of the 31st International Conference on Computational Linguistics",
    month = jan,
    year = "2025",
    address = "Abu Dhabi, UAE",
    publisher = "Association for Computational Linguistics",
    url = "https://aclanthology.org/2025.coling-main.108/",
    pages = "1600--1612",
    abstract = "Automatic fact-checking systems that employ large language models (LLMs) have achieved human-level performance in combating widespread misinformation. However, current LLM-based fact-checking systems fail to reveal the reasoning principles behind their decision-making for the claim verdict. In this work, we propose Correlation-Enhanced Explainable Fact-Checking (CorXFact), an LLM-based fact-checking system that simulates the reasoning principle of human fact-checkers for evidence-based claim verification: assessing and weighing the correlations between the claim and each piece of evidence. Following this principle, CorXFact enables efficient claim verification and transparent explanation generation. Furthermore, we contribute the CorFEVER test set to comprehensively evaluate the CorXFact system in claim-evidence correlation identification and claim verification in both closed-domain and real-world fact-checking scenarios. Experimental results show that our proposed CorXFact significantly outperforms four strong fact-checking baselines in claim authenticity prediction and verdict explanation."
}

@inproceedings{pan-etal-2023-fact,
    title = "Fact-Checking Complex Claims with Program-Guided Reasoning",
    author = "Pan, Liangming  and
      Wu, Xiaobao  and
      Lu, Xinyuan  and
      Luu, Anh Tuan  and
      Wang, William Yang  and
      Kan, Min-Yen  and
      Nakov, Preslav",
    editor = "Rogers, Anna  and
      Boyd-Graber, Jordan  and
      Okazaki, Naoaki",
    booktitle = "Proceedings of the 61st Annual Meeting of the Association for Computational Linguistics (Volume 1: Long Papers)",
    month = jul,
    year = "2023",
    address = "Toronto, Canada",
    publisher = "Association for Computational Linguistics",
    url = "https://aclanthology.org/2023.acl-long.386/",
    doi = "10.18653/v1/2023.acl-long.386",
    pages = "6981--7004",
    abstract = "Fact-checking real-world claims often requires collecting multiple pieces of evidence and applying complex multi-step reasoning. In this paper, we present Program-Guided Fact-Checking (ProgramFC), a novel fact-checking model that decomposes complex claims into simpler sub-tasks that can be solved using a shared library of specialized functions. We first leverage the in-context learning ability of large language models to generate reasoning programs to guide the verification process. Afterward, we execute the program by delegating each sub-task to the corresponding sub-task handler. This process makes our model both explanatory and data-efficient, providing clear explanations of its reasoning process and requiring minimal training data. We evaluate ProgramFC on two challenging fact-checking datasets and show that it outperforms seven fact-checking baselines across different settings of evidence availability, with explicit output programs that benefit human debugging. Our codes and data are publicly available at \url{https://github.com/mbzuai-nlp/ProgramFC}."
}

@inproceedings{aly-vlachos-2022-natural,
    title = "Natural Logic-guided Autoregressive Multi-hop Document Retrieval for Fact Verification",
    author = "Aly, Rami  and
      Vlachos, Andreas",
    editor = "Goldberg, Yoav  and
      Kozareva, Zornitsa  and
      Zhang, Yue",
    booktitle = "Proceedings of the 2022 Conference on Empirical Methods in Natural Language Processing",
    month = dec,
    year = "2022",
    address = "Abu Dhabi, United Arab Emirates",
    publisher = "Association for Computational Linguistics",
    url = "https://aclanthology.org/2022.emnlp-main.411/",
    doi = "10.18653/v1/2022.emnlp-main.411",
    pages = "6123--6135",
    abstract = "A key component of fact verification is the evidence retrieval, often from multiple documents. Recent approaches use dense representations and condition the retrieval of each document on the previously retrieved ones. The latter step is performed over all the documents in the collection, requiring storing their dense representations in an index, thus incurring a high memory footprint. An alternative paradigm is retrieve-and-rerank, where documents are retrieved using methods such as BM25, their sentences are reranked, and further documents are retrieved conditioned on these sentences, reducing the memory requirements. However, such approaches can be brittle as they rely on heuristics and assume hyperlinks between documents.We propose a novel retrieve-and-rerank method for multi-hop retrieval, that consists of a retriever that jointly scores documents in the knowledge source and sentences from previously retrieved documents using an autoregressive formulation and is guided by a proof system based on natural logic that dynamically terminates the retrieval process if the evidence is deemed sufficient.This method exceeds or is on par with the current state-of-the-art on FEVER, HoVer and FEVEROUS-S, while using 5 to 10 times less memory than competing systems. Evaluation on an adversarial dataset indicates improved stability of our approach compared to commonly deployed threshold-based methods. Finally, the proof system helps humans predict model decisions correctly more often than using the evidence alone."
}

@inproceedings{thorne-vlachos-2018-automated,
    title = "Automated Fact Checking: Task Formulations, Methods and Future Directions",
    author = "Thorne, James  and
      Vlachos, Andreas",
    editor = "Bender, Emily M.  and
      Derczynski, Leon  and
      Isabelle, Pierre",
    booktitle = "Proceedings of the 27th International Conference on Computational Linguistics",
    month = aug,
    year = "2018",
    address = "Santa Fe, New Mexico, USA",
    publisher = "Association for Computational Linguistics",
    url = "https://aclanthology.org/C18-1283/",
    pages = "3346--3359",
    abstract = "The recently increased focus on misinformation has stimulated research in fact checking, the task of assessing the truthfulness of a claim. Research in automating this task has been conducted in a variety of disciplines including natural language processing, machine learning, knowledge representation, databases, and journalism. While there has been substantial progress, relevant papers and articles have been published in research communities that are often unaware of each other and use inconsistent terminology, thus impeding understanding and further progress. In this paper we survey automated fact checking research stemming from natural language processing and related disciplines, unifying the task formulations and methodologies across papers and authors. Furthermore, we highlight the use of evidence as an important distinguishing factor among them cutting across task formulations and methods. We conclude with proposing avenues for future NLP research on automated fact checking."
}

\appendix
\section{Theorems and Proofs}
Let $F=\{f_1, \dots, f_N\}$ be a finite fact set, $C=\{C_1, \dots, C_m\}$ with $C_i \subseteq F$ be a collection of constraint scopes, and a perfect oracle $O:2^F\to \{\mathsf{cons}, \mathsf{incons}\}$ that returns whether a subset of facts is jointly consistent. Throughout the proofs, we will use two standard properties. 
    \begin{itemize}
        \item \textbf{Monotonicity}: If $U\subseteq S \subseteq F$ and $O(U)=\mathsf{incons}$ then $O(S) = \mathsf{incons}$.
        \item \textbf{Existence of MUS}: If $O(S) = \mathsf{incons}$ and $S$ is finite then $S$ contains a MUS $U \subseteq S$.
    \end{itemize}
    These follow from finiteness and the definition of minimality.
\subsection{Objective Equivalence and Reduction to Hitting Set}
\label{appendix:objective}
\begin{theorem}{Objective Equivalence}
\label{lem:equiv_obj}
Maximizing coverage 
\begin{align}
    \max_{F'\subseteq F} |F'| \quad \text{s.t.}  \quad A(F'\cap C_i) = \mathsf{cons}. \quad \forall i \in [m]
\end{align}
is equivalent to minimizing deletions
\begin{align}
\label{eq:min_delete_feasibility}
    \min_{R\subseteq F} |R| \quad \text{s.t.}  \quad A((F\setminus R)\cap C_i) = \mathsf{cons}, \quad \forall i \in [m]
\end{align}
\end{theorem}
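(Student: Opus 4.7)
The plan is to establish the equivalence by exhibiting the obvious involution $R = F \setminus F'$ (equivalently $F' = F \setminus R$) between feasible solutions of the two problems, and then observing that the objective values differ only by the constant $|F|$.

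First, I would verify that this correspondence is a bijection between the feasible regions. For any $F' \subseteq F$, set $R = F \setminus F'$; then $R \subseteq F$ and $F \setminus R = F'$, so $(F \setminus R) \cap C_i = F' \cap C_i$ for every $i \in [m]$. Consequently, $F'$ satisfies the coverage constraint $A(F' \cap C_i) = \mathsf{cons}$ for all $i$ if and only if $R$ satisfies the deletion-feasibility constraint $A((F \setminus R) \cap C_i) = \mathsf{cons}$ for all $i$. Symmetry of the map gives the reverse direction, so feasibility is preserved in both directions.

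Next, I would relate the two objectives. Since $F' \subseteq F$ and $R = F \setminus F'$, disjointness and finiteness give $|F'| + |R| = |F|$, i.e., $|F'| = |F| - |R|$. Because $|F|$ is a constant independent of the choice of $F'$ (equivalently $R$), maximizing $|F'|$ over the feasible region of the first problem is the same as minimizing $|R|$ over the feasible region of the second. Concretely, if $F^\star$ is optimal for the coverage problem, then $R^\star = F \setminus F^\star$ is feasible for the deletion problem and any strictly smaller feasible $R$ would contradict the optimality of $F^\star$ via the bijection; the converse argument is identical.

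There is no real obstacle here: the only thing to be careful about is that the coverage constraint is stated in terms of $F' \cap C_i$ while the deletion constraint is stated in terms of $(F \setminus R) \cap C_i$, and one must check that these are literally the same set under the bijection, which they are since $F' \subseteq F$ implies $(F \setminus (F \setminus F')) \cap C_i = F' \cap C_i$. After that, the argument is a one-line bijection-plus-constant-offset observation.
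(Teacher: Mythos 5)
Your proof is correct and follows exactly the same route as the paper's: the complementation bijection $R = F\setminus F'$, the observation that $|F'| = |F| - |R|$, and the check that the feasibility constraints coincide under the bijection. You simply spell out the details (disjointness, $(F\setminus(F\setminus F'))\cap C_i = F'\cap C_i$) that the paper's terser proof leaves implicit.
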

\begin{proof}
  Define a bijection between solutions $R=F\setminus F'$ and $F'=F\setminus R$. Then 
    \begin{align}
        |F'|=|F|-|R|.
    \end{align}
    Hence, maximizing $|F'|$ is equivalent to minimizing $|R|$. Under this bijection, the feasibility constraints are identical.
\end{proof}

Let $\mathcal{A}_{\mathsf{incons}}$ be the family of all MUSes possible in $C$. Formally, based on Definition~\ref{def:MUS}, we define
$$\mathcal{A}_{\mathsf{incons}} = \{ U \in C_j : j \in [m],\,  U \mbox{ is a MUS w.r.t.\ $A$}\}.$$
\begin{theorem} A set $R \subseteq F$ 
is feasible iff it is a hitting set for $\mathcal{A}_{\mathsf{incons}}$.
\end{theorem}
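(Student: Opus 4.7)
The plan is to prove both directions by contrapositive, exploiting the two structural properties highlighted at the start of the appendix: monotonicity of $O$ (equivalently $A$, treated as a perfect oracle here) and the fact that every inconsistent finite set contains a MUS.

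For the forward direction, I would assume $R$ is feasible and pick an arbitrary $U \in \mathcal{A}_{\mathsf{incons}}$, so that $U \subseteq C_j$ for some $j$ with $A(U)=\mathsf{incons}$. Suppose toward contradiction that $R \cap U = \varnothing$. Then $U \subseteq F\setminus R$, and combined with $U \subseteq C_j$ this gives $U \subseteq (F\setminus R)\cap C_j$. Monotonicity then forces $A((F\setminus R)\cap C_j) = \mathsf{incons}$, contradicting feasibility. Hence $R\cap U \neq \varnothing$, so $R$ hits $\mathcal{A}_{\mathsf{incons}}$.

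For the backward direction, I would assume $R$ hits every $U \in \mathcal{A}_{\mathsf{incons}}$ and suppose toward contradiction that feasibility fails: there exists $i$ with $A\bigl((F\setminus R)\cap C_i\bigr) = \mathsf{incons}$. Applying the existence-of-MUS property to the finite inconsistent set $S := (F\setminus R)\cap C_i$ yields a MUS $U \subseteq S$. Since $U \subseteq S \subseteq C_i$, $U$ qualifies for membership in $\mathcal{A}_{\mathsf{incons}}$. But $U \subseteq F\setminus R$ means $R\cap U = \varnothing$, contradicting the hitting-set hypothesis.

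The only subtle point, and the one worth stating carefully, is that $\mathcal{A}_{\mathsf{incons}}$ as defined consists of MUSes that sit inside \emph{some} $C_j$ rather than arbitrary MUSes of $F$; both directions use this containment crucially (forward: to invoke monotonicity inside the right scope; backward: to certify the extracted MUS is actually in the family $\mathcal{A}_{\mathsf{incons}}$). No genuine obstacle arises beyond tracking this containment, so the proof should be short once the two contrapositives are set up.
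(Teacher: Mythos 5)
Your proof is correct and matches the paper's argument essentially step for step: both directions proceed by contradiction, the forward direction via monotonicity applied inside the relevant scope $C_j$, and the backward direction via extracting a MUS from the offending $(F\setminus R)\cap C_i$ and noting it lies in $\mathcal{A}_{\mathsf{incons}}$ yet is disjoint from $R$. The point you flag about tracking the containment $U\subseteq C_j$ is indeed the only place where care is needed, and the paper handles it the same way.
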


  \begin{proof}
  In one direction, assume $R \subseteq F$ is feasible for minimum deletion. Suppose for contradiction, there exists $U \in \mathcal{A}_{\mathsf{incons}}$ with $R \cap U = \varnothing$. By definition of $\mathcal{A}_{\mathsf{incons}}$, there is some $i$ with $U \subseteq C_i$, and thus $U \subseteq (F\setminus R) \cap C_i$. Since $A(U) = \mathsf{incons}$ by monotonicity $(F\setminus R)\cap C_i$ would be inconsistent, contradiction to \Cref{eq:min_delete_feasibility}. Hence, $R\cap U \neq \varnothing$ for all $U \in \mathcal{A}_{\mathsf{incons}}$; i.e., $R$ is a hitting set.\\
For the other direction, assume $R$ is a hitting set for $\mathcal{A}_{\mathsf{incons}}$, i.e.\ $R \cap U \neq \varnothing$ for all $U \in \mathcal{A}_{\mathsf{incons}}$.  
Suppose for contradiction that $R$ is not feasible; then there exists some $i$ with 
$A((F\setminus R)\cap C_i)=\mathsf{incons}$.  
By the existence-of-MUS property, $(F\setminus R)\cap C_i$ contains a MUS $U \subseteq (F\setminus R)\cap C_i$.  
Then $U \in \mathcal{A}_{\mathsf{incons}}$ but $U \cap R = \varnothing$, contradicting that $R$ hits all MUSes.  
Hence $A((F\setminus R)\cap C_i)=\mathsf{cons}$ for all $i$, i.e.\ $R$ is feasible.
\end{proof}

\subsection{Pairwise Checks Are Insufficient for Global Consistency}
\label{appendix:pairwise_insuff}
\begin{restatable}[Pairwise Insufficiency]{theorem}{pairwise}
  \label{thm:pair}
Even with access to $A$, there exist $F$ of size $N\geq 3$ such that all pairs are consistent, but $F$ is inconsistent globally.  
\end{restatable}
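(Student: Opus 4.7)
The plan is to exhibit an explicit small counterexample and then pad it up to any $N \geq 3$. Since the ground-truth function $A$ is allowed to encode any logical relation among facts, it suffices to give a three-fact family whose pairs are each satisfiable while the triple is not, and then extend with neutral facts.

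Concretely, I would take $f_1,f_2,f_3$ to correspond to the propositional statements $A$, $B$, and $\lnot A \lor \lnot B$ respectively, and let $A$ judge a subset consistent iff its conjunction is satisfiable. For the three pairs I would write down explicit truth assignments: $A=B=\mathsf{true}$ witnesses consistency of $\{f_1,f_2\}$; $A=\mathsf{true}, B=\mathsf{false}$ witnesses $\{f_1,f_3\}$; and $A=\mathsf{false}, B=\mathsf{true}$ witnesses $\{f_2,f_3\}$. For the triple, any satisfying assignment would need $A \land B$ to hold (forced by $f_1,f_2$) while simultaneously $\lnot A \lor \lnot B$ holds (forced by $f_3$), a direct contradiction, so $A(\{f_1,f_2,f_3\})=\mathsf{incons}$.

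To handle arbitrary $N \geq 3$, I would pad by appending $N-3$ tautological facts $f_4,\dots,f_N$ (e.g.\ each encoding a fresh tautology over a disjoint vocabulary). Adding tautologies preserves satisfiability of every subset that did not already contain $\{f_1,f_2,f_3\}$, so all pairs remain consistent; meanwhile the full set still contains the unsatisfiable triple and is therefore globally inconsistent by monotonicity, as stated at the top of the appendix.

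There is no real obstacle here beyond choosing a clean encoding. The only subtlety worth flagging is that the theorem is phrased for natural-language facts judged by $A$, not for propositional formulas per se; I would therefore note briefly that since $A$ is permitted to encode any boolean function (as used in the Complexity Landscape paragraph), any propositional counterexample lifts to a natural-language one by letting each $f_i$ be a sentence whose intended meaning matches the corresponding propositional clause.
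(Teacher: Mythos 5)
Your proof is correct and takes essentially the same approach as the paper: both exhibit an explicit three-fact propositional counterexample where each pair is satisfiable but the triple is jointly unsatisfiable. The paper instantiates with the XOR/odd-cycle encoding ($f_1: A\oplus B$, $f_2: B\oplus C$, $f_3: C\oplus A$, i.e.\ 2-coloring a 3-cycle) while you use $A$, $B$, $\lnot A \lor \lnot B$; both are valid, and your explicit tautology-padding for arbitrary $N\geq 3$ handles the size quantifier a bit more carefully than the paper's $N=3$ construction.
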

\begin{proof}
Given $A$ we now want to show pairwise consistency does not imply global consistency. Let the universe be boolean assignments to variables $A, B, C \in \{0, 1\}$. Consider the three facts $f_1: A \oplus B=1$, $f_2: B \oplus C = 1$, $f_3: C \oplus A = 1$. Then, any two pairs can be satisfied, for example $f_1, f_2$ are satisfied with $A=1, B=0, C=1$. However, this is jointly unsatisfiable, from $f_1$ and $f_2$: $A=\neg B$ and $C=\neg B$ hence $A=C$, then $C \oplus A =0$ which contradicts $f_3$. Hence, $f_1, \land f_2, \land f_3$ is inconsistent. We can think of this as a graph colouring problem, for example the constraints $A \neq B, B \neq C, C \neq A$ requires a 2-colouring of a 3-cycle, which is impossible.
\end{proof}

\subsection{Soundness of Algorithm~\ref{alg:main}} \label{app:sound}
\begin{theorem}[Soundness under Perfect Oracle] 
Assume that the LLM oracle $O$ has zero error ($\alpha = 0, \beta = 0$), then for every $i \in [m]$, the retained subset $F'\triangleq F\setminus R$ satifies 
    \begin{align}
        A(F'\cap C_i) = \mathsf{cons}.
    \end{align}
\end{theorem}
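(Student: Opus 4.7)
The plan is to establish a simple bookkeeping invariant for the outer while-loop of Algorithm~\ref{alg:main} and then read off the conclusion from the loop's termination condition. Because $\alpha = \beta = 0$, the oracle $O$ agrees with the ground-truth $A$ on every subset, so any test $O(S) = \mathsf{incons}$ executed by the algorithm is equivalent to $A(S) = \mathsf{incons}$. To avoid confusion, I would rename the scopes given as input to the algorithm as $C_j^{\mathrm{orig}}$ (the $C_i$ appearing in the theorem statement) and keep $C_j$ for the potentially trimmed copies that the algorithm overwrites.

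First, I would prove by induction on the iteration count the invariant that at the top of every iteration, $C_j = C_j^{\mathrm{orig}} \cap F'$ for every $j \in [m]$. The base case is immediate, since initially $F' = F$ and no trimming has taken place. For the inductive step, if the invariant holds at the top of an iteration, then after the updates $F' \gets F' \setminus H$ and $C_j \gets C_j \setminus H$ one has
\begin{align*}
(C_j^{\mathrm{orig}} \cap F') \setminus H \;=\; C_j^{\mathrm{orig}} \cap (F' \setminus H),
\end{align*}
which is precisely the invariant for the new $F'$.

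Second, I would argue termination so that the conclusion is nonvacuous. Whenever the loop body runs, some current $C_j$ satisfies $A(C_j) = \mathsf{incons}$, hence contains a MUS, so the QuickXplain call returns a nonempty $U_j$; any hitting set $H$ for the resulting nonempty family $\mathcal{U}$ must itself be nonempty. Consequently $|F'|$ strictly decreases each iteration, and since $F$ is finite the loop must halt in at most $|F|$ rounds.

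Finally, upon exit the loop guard forces $O(C_j) = \mathsf{cons}$ for every current $C_j$. By perfect oracle this upgrades to $A(C_j) = \mathsf{cons}$, and the invariant rewrites $C_j$ as $C_j^{\mathrm{orig}} \cap F'$, yielding $A(F' \cap C_j^{\mathrm{orig}}) = \mathsf{cons}$ for all $j$, which is the claim. The only mildly subtle point is disentangling the algorithm's evolving $C_j$ from the fixed original scopes that appear in the theorem; once the invariant is pinned down the remainder is routine, so I do not anticipate a substantive obstacle.
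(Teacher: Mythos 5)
Your proof is correct but follows a genuinely different route from the paper. The paper argues by contradiction using the MUS machinery: it supposes some $F'\cap C_i$ is inconsistent, extracts a MUS $U\subseteq F'\cap C_i$, notes $U\in\mathcal{A}_{\mathsf{incons}}$ and $U\cap R=\varnothing$, and declares this contradicts ``$R$ hits every $U\in\mathcal{A}_{\mathsf{incons}}$.'' That last claim is asserted rather than derived from the algorithm's behavior, so the paper's proof leans on the hitting-set duality result and leaves the reader to see why the algorithm's cumulative removals constitute a global hitting set. Your argument instead works directly from the algorithm's control flow: the invariant $C_j = C_j^{\mathrm{orig}}\cap F'$ at the top of each iteration, termination because $|F'|$ strictly decreases while any scope still tests inconsistent, and then reading the conclusion off the loop guard plus the perfect-oracle identity $O\equiv A$. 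This is more self-contained—you never need the family $\mathcal{A}_{\mathsf{incons}}$ or the feasibility-iff-hitting-set theorem—and it makes explicit the one bookkeeping subtlety the paper glosses over, namely that the $C_j$ the algorithm queries at termination are the trimmed copies $C_j^{\mathrm{orig}}\cap F'$. The trade-off is that the paper's route exposes the structural connection to hitting-set duality (useful for the noisy/practical soundness variant that follows), whereas yours is the more elementary and airtight argument for this particular statement.
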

\begin{proof}
If $O$ is perfect, then it should match the groun-truth $A$ on all inputs. Suppose for contradiction that $O(F'\cap C_i) = \mathsf{incons}$ for some $i \in [m]$. By MUS existence, there is a MUS $U \subseteq F' \cap C_i$. Then, $U\subseteq C_i$, so $U \in \mathcal{A}_{\mathsf{incons}}$. We know $U \subseteq F \setminus R$ so we know $U \cap R = \varnothing$ contradicting $R$ hits every $U \in \mathcal{A}_{\mathsf{incons}}$. Therefore, $O(F'\cap C_i) = \mathsf{cons}$ for all $i$.
\end{proof}
In practical scenarios, the MUSes $\mathcal{O}_{\mathsf{incons}}$ extracted by Algorithm~\ref{alg:main} 
may contain errors, as the procedure depends on a noisy oracle $O$. However, $F'$ generated by in Algorithm~\ref{alg:main} is still sound with respect to the MUSes found. 
\begin{theorem}[Practical Soundness w.r.t.\ Extracted Conflicts]
    Let $\mathcal{O}_{\mathsf{incons}}$ be the set of MUSes actually extracted by the Algorithm~\ref{alg:main} under ${O}$. Let $R\subseteq F$ satisfy $R\cap U \neq \varnothing$ for all $U \in \widehat{\mathcal{U}}$. The retained set $F\setminus R$ is consistent with respect to the extracted conflicts $\mathcal{O}_{\mathsf{incons}}$.
\end{theorem}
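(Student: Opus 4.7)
The plan is to interpret the statement as a purely set-theoretic consequence of the hitting-set condition, relativized to the extracted family $\mathcal{O}_{\mathsf{incons}}$ (taking $\widehat{\mathcal{U}} = \mathcal{O}_{\mathsf{incons}}$, which appears to be the intended reading) instead of the ideal family $\mathcal{A}_{\mathsf{incons}}$ used in the perfect-oracle soundness proof. First I would pin down a precise reading of ``consistent with respect to the extracted conflicts'': namely, that no $U \in \mathcal{O}_{\mathsf{incons}}$ is contained in $F \setminus R$. Stating this up front removes any ambiguity about what the theorem asserts, since we are deliberately \emph{not} making a claim about the ground-truth function $A$, only about the conflicts that the algorithm actually surfaced.

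With that reading fixed, I would run the same contradiction argument used in the perfect-oracle case. Suppose for contradiction that some extracted MUS $U \in \mathcal{O}_{\mathsf{incons}}$ satisfies $U \subseteq F\setminus R$. Then $U \cap R = \varnothing$, which directly contradicts the hypothesis that $R \cap U \neq \varnothing$ for every $U \in \mathcal{O}_{\mathsf{incons}}$. Hence no extracted conflict survives in $F\setminus R$, which is exactly the required notion of soundness relative to $\mathcal{O}_{\mathsf{incons}}$.

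The distinction from the perfect-oracle theorem is worth highlighting in a short remark at the end: the argument nowhere invokes monotonicity of $A$ or the existence-of-MUS property for $A$, since we are not claiming $A(F'\cap C_i) = \mathsf{cons}$. Consequently, the guarantee degrades gracefully in the noisy regime: the output $F' = F\setminus R$ is only as sound as the extracted conflict family is faithful to the true one, and any MUSes missed by the noisy oracle $O$ during QuickXplain calls simply fall outside the guarantee.

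The main obstacle here is not mathematical depth but definitional care. Because the theorem is essentially a one-line contrapositive of the hitting-set condition, the real work is isolating the correct formal meaning of ``consistent with respect to extracted conflicts'' and being explicit that the statement is sound \emph{only} relative to $\mathcal{O}_{\mathsf{incons}}$, not relative to $A$ or $\mathcal{A}_{\mathsf{incons}}$. Once that interpretive step is made, the proof is a direct two-line contradiction.
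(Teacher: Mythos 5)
Your proof is correct and takes essentially the same approach as the paper: both observe that the hitting-set hypothesis $R\cap U\neq\varnothing$ immediately yields $U\not\subseteq F\setminus R$ for each $U\in\mathcal{O}_{\mathsf{incons}}$ (the paper states this directly, you phrase it as a contrapositive/contradiction, which is the same argument). Your additional remarks correctly identifying $\widehat{\mathcal{U}}$ with $\mathcal{O}_{\mathsf{incons}}$, pinning down the intended meaning of ``consistent with respect to extracted conflicts,'' and noting that no monotonicity or MUS-existence property of $A$ is needed, are all sound and in the spirit of the paper's brief proof.
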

\begin{proof}
    For every $U \in \mathcal{O}_{\mathsf{incons}}$, $R \cap U \neq \varnothing$ implies $U \not \subseteq F \setminus R$. Thus none of the discovered MUSes in $\mathcal{O}_{\mathsf{incons}}$ remain after removal.
\end{proof}

\subsection{Error Reduction under Repetition}
\label{appendix:noise_reduction}
    Let the true perfect oracle answer for a query be $Y \in \{\mathsf{cons}, \mathsf{incons}\}$. A noisy oracle ${O}$ is repeatedly queried $r$ times on the same set $S$. Each repetition returns $\widehat{Y}_t \in \{\mathsf{cons}, \mathsf{incons}\}$, $t=1, \dots, r$. For each repetition and conditioning on the true label $Y$, we get
    \begin{align}
        \text{Pr}(\widehat{Y}_t \neq Y | Y) \leq \varepsilon \quad \varepsilon\triangleq\max\{\alpha, \beta\} < \frac{1}{2}
    \end{align} We assume independence across repetitions, e.g., $\{\widehat{Y}_{t}\}_{t=1}^r$ are independent conditioned on $Y$. We also assume $r$ is odd.
\begin{theorem}[Error Reduction Under  Repetition]
Let $\mathcal{O}$ be an $(\alpha,\beta)$--noisy subset-consistency oracle
with $\max\{\alpha,\beta\}<\tfrac12$.
If each query is evaluated $r$ times independently and aggregated by majority vote,
the effective error rate per aggregated call is at most
$\exp(-2r\gamma^2)$ where $\gamma=\tfrac12-\max\{\alpha,\beta\}$.
\end{theorem}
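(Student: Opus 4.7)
The plan is to reduce the statement to a standard Chernoff/Hoeffding bound on the tail of a sum of independent Bernoulli random variables. Fix a query set $S$ and condition on its true label $Y\in\{\mathsf{cons},\mathsf{incons}\}$. For each of the $r$ independent repetitions, let $X_t = \mathbb{1}[\widehat{Y}_t \neq Y]$. By the noise model, $X_1,\dots,X_r$ are independent Bernoulli random variables (conditional on $Y$) with $\mathbb{E}[X_t] = \Pr[\widehat{Y}_t \neq Y \mid Y] \le \varepsilon$, where $\varepsilon = \max\{\alpha,\beta\} = \tfrac12 - \gamma < \tfrac12$.

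Next, I would translate the majority-vote failure event into a tail event for $\sum_{t=1}^r X_t$. Since $r$ is odd, the majority vote is wrong exactly when strictly more than $r/2$ of the repetitions are wrong, i.e.\ when $\tfrac{1}{r}\sum_{t=1}^r X_t > \tfrac12$. Because $\mathbb{E}\bigl[\tfrac{1}{r}\sum_t X_t\bigr] \le \varepsilon = \tfrac12 - \gamma$, this event implies a deviation of at least $\gamma$ above the mean:
\begin{equation*}
\Pr\!\left[\tfrac{1}{r}\sum_{t=1}^{r} X_t > \tfrac12 \,\Big|\, Y\right]
\;\le\; \Pr\!\left[\tfrac{1}{r}\sum_{t=1}^{r} X_t - \mathbb{E}[X_t] \ge \gamma \,\Big|\, Y\right].
\end{equation*}
Applying Hoeffding's inequality to the bounded independent random variables $X_t\in[0,1]$ yields the bound $\exp(-2r\gamma^2)$. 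Since this bound holds for either value of $Y$, it holds unconditionally, which gives the claimed effective error rate per aggregated call.

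The only mildly delicate step is handling the fact that $\mathbb{E}[X_t]$ may be strictly less than $\varepsilon$: here I would note that replacing the true mean with its upper bound $\varepsilon$ can only enlarge the deviation $\gamma = \tfrac12 - \varepsilon$, so the inequality above is conservative and the Hoeffding bound still applies. The independence assumption on repeated queries (stated in the setup) is what makes Hoeffding available; I would flag that this is the substantive modeling assumption, while the probabilistic calculation itself is routine. No other assumption about the oracle's structure or the scope family is needed.
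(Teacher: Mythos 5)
Your proof is correct and follows essentially the same approach as the paper: define the Bernoulli indicators $X_t=\mathbb{1}[\widehat{Y}_t\neq Y]$, observe that $\mathbb{E}[X_t]\le\varepsilon=\max\{\alpha,\beta\}$ so the majority-vote failure event requires a deviation of at least $\gamma=\tfrac12-\varepsilon$ from the mean, and apply Hoeffding's inequality to obtain $\exp(-2r\gamma^2)$. Your remark about the mean possibly being strictly below $\varepsilon$ is handled identically (and equally implicitly) in the paper's version, so the two arguments match step for step.
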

\begin{proof}
    The majority vote is wrong iff at least half of the repetitions are wrong, using a conservative threshold we get
    \begin{align*}
        \{\widehat{Y}^{maj} \neq Y\} \subseteq \{S_r \geq r/2\}
    \end{align*}
    where $X_t = \mathbb{I}\{\widehat{Y}_t \neq Y\}$ and $S_r = \sum_{t=1}^r X_t$.
    So we can apply Hoeffding's inequality to $S_r$ (sum of independent Bernoulli variables with means $\leq \varepsilon  \triangleq\max\{\alpha, \beta\} < \frac{1}{2}$. For any $a > 0$, we have
    \begin{align*}
     \Pr(S_r - \mathbb{E}[S_r] \geq a) \leq \exp(-\frac{2a^2}{r})
    \end{align*}
    here $\mathbb{E}[S_r] = \sum_{t=1}^r \mathbb{E}[X_t] \leq r\varepsilon$. Set $a = r(\frac{1}{2} - \varepsilon) = r\gamma$. Then,
    \begin{align*}
        \Pr(S_r \geq r/2)   & \leq  \Pr(S_r - \mathbb{E}[S_r]  \geq r(\frac{1}{2}-\varepsilon)) \\ & \leq \exp(-2r\gamma^2).
    \end{align*}
    Combining together we get per-call bound
    \begin{align*}
         \Pr(\widehat{Y}^{maj} \neq Y) \leq \exp(-2r\gamma^2).
    \end{align*}
\end{proof}

\section{Hitting Set Approximation}
\label{app:hitting_set}
\begin{restatable}[Greedy Hitting Set Approximation~\citep{vazirani2001approximation}]{lemma}{HittingSetApprox}
\label{lem:hittingset}
Let $\mathcal{U}$ be a family of $m$ MUSes (conflict sets) over facts $F$.
The greedy hitting set algorithm returns a hitting set $H$
satisfying $|H| \le (1 + \ln m)\,|H^\star|$,
where $H^\star$ is an optimal (minimum-cardinality) hitting set of $\mathcal{U}$.
\end{restatable}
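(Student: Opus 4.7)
The plan is to reduce the statement to the classical greedy analysis for \textsc{Set Cover} via the standard hitting-set/set-cover duality, and then track the residual number of uncovered MUSes across iterations. Concretely, I would first set up the dual instance: for each fact $f\in F$, define $S_f\triangleq\{U\in\mathcal U : f\in U\}$, so that the ground set of the cover instance is $\mathcal U$ itself (with $|\mathcal U|=m$). A hitting set $H\subseteq F$ for $\mathcal U$ is then in one-to-one correspondence with a set cover $\{S_f\}_{f\in H}$ of the ground set $\mathcal U$, and the greedy rule in \Cref{alg:main} (``pick the fact covering the largest number of uncovered MUSes'') is literally the greedy \textsc{Set Cover} rule on $\{S_f\}_{f\in F}$. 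In particular, an optimal hitting set $H^\star$ maps to an optimal cover of size $k\triangleq |H^\star|$.

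Next I would run the standard residual-counting argument. Let $r_t$ denote the number of MUSes still uncovered after $t$ greedy picks, with $r_0=m$. Since $H^\star$ covers all $r_t$ remaining MUSes using $k$ sets, a pigeonhole argument produces some fact $f$ whose set $S_f$ hits at least $r_t/k$ of them; hence the greedy pick satisfies
\begin{align*}
r_{t+1} \;\le\; r_t\Bigl(1-\tfrac{1}{k}\Bigr).
\end{align*}
Iterating and using $1-x\le e^{-x}$ gives $r_t \le m\,e^{-t/k}$. The greedy procedure terminates as soon as $r_t<1$, i.e., $r_t=0$, so it suffices to pick $t$ large enough that $m\,e^{-t/k}<1$, which holds for $t > k\ln m$. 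A clean termination bookkeeping step then yields $|H|\le k(1+\ln m) = (1+\ln m)\,|H^\star|$, as claimed.

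The only mildly delicate point, and the one I would treat as the main obstacle, is the transition from the continuous bound $r_t \le m\,e^{-t/k}$ to the integer termination bound $|H|\le k(1+\ln m)$: one must argue that once $r_t < k$, at most $k$ additional greedy picks suffice (since each remaining MUS contains at least one element of $H^\star$, and in the worst case these elements are chosen one by one). Combining $\lceil k\ln m\rceil$ initial picks with at most $k$ cleanup picks gives the stated approximation ratio. The rest is a direct invocation of the well-known greedy \textsc{Set Cover} analysis (see, e.g., \citet{vazirani2001approximation}), so I would not grind through the inequalities in the paper but instead cite the canonical argument after establishing the duality reduction.
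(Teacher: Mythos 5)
The paper does not actually prove this lemma; it is stated with a citation to \citet{vazirani2001approximation} and no proof is given in the appendix, so there is no paper argument to compare against. Your reconstruction is the standard textbook analysis: the duality map $f \mapsto S_f = \{U \in \mathcal U : f \in U\}$ correctly turns the hitting-set instance into a \textsc{Set Cover} instance over ground set $\mathcal U$ of size $m$, and the pigeonhole/residual-counting recurrence $r_{t+1} \le r_t(1 - 1/k)$ with $k = |H^\star|$ is the canonical route to the logarithmic ratio. One small bookkeeping slip at the end: combining ``$\lceil k\ln m\rceil$ initial picks with at most $k$ cleanup picks'' gives $\lceil k\ln m\rceil + k$, which can exceed $k(1+\ln m)$ by one. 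The cleanup detour is unnecessary: the smallest integer $t$ with $m e^{-t/k} < 1$ is $\lfloor k\ln m\rfloor + 1 \le k\ln m + 1 \le k(1 + \ln m)$ since $k \ge 1$, which already gives the stated bound directly (with the degenerate case $k=1$ handled by noting greedy then picks the universal fact in one step). Modulo that cosmetic fix, the proof is correct and is the same argument the paper defers to.
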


\begin{algorithm}[t]
\caption{QuickXplain (QX) for MUS Extraction}
\label{alg:quickxplain}
\KwIn{Oracle ${O}$\; Candidate set $S$\; Background $B$ (assumed consistent)}
\KwOut{Subset-minimal inconsistent set $\Delta \subseteq S$ (or $\varnothing$)}
\If{$S=\varnothing$}{\Return{$\varnothing$}}
\If{${O}(B \cup S)=\mathsf{cons}$}{\Return{$\varnothing$}}
\If{$|S|=1$}{\Return{$S$}}
Split $S$ into two (nearly) equal parts $S_1,S_2$\;
$\Delta_1 \gets \textsc{QX}({O}, S_1, B \cup S_2)$\;
$\Delta_2 \gets \textsc{QX}({O}, S_2, B \cup \Delta_1)$\;
\Return{$\Delta_1 \cup \Delta_2$}
\end{algorithm}

\section{QuickXplain}
\label{appendix:qx}
QuickXplain (QX) is a classic divide-an-conquer method for localizing a minimal unsatisfiable subset (MUS) from an inconsistent set $S$ under a consistency oracle $O: 2^F \to \{\textsf{cons}, \textsf{incons}\}$ \cite{quickxplain}. The algorithm adaptively queries subsets to find a subset-minimal inconsistent core with logarithmic depth in $|S|$. We provide a pseudocode in Algorithm 2.
QX narrows the inconsistent subset by recursively testing halves of $S$. If $B \cup S_1$ is inconsistent, the conflict lies in $S_1$; otherwise it lies in $S_2$. Recursion stops once singleton conflicts are reached, yielding a subset-minimal inconsistent set. For a perfect oracle, QX requires $\mathcal{O}(k\log|S|)$ queries to locate a conflict of size $k$.

\section{Pairwise/NLI Baselines are not comparable at scale}
Methods that decide consistency by evaluating all sentence pairs require ${N \choose 2}$ oracle calls. For typical sizes of facts e.g., $N \in [30, 100]$, that implies $435-4950$ calls \emph{per instance}. Under noisy LLM oracles, each decsion further requires $r$ repetitions (majority vote). Our QXR algorithm performs at most $I\cdot m \cdot (k\log N)$ number of queries where $k$ is the MUS maximum size (empirically small), $m$ is the constraint scope (if no constraints or in general $m=1$) and $I$ is the number of outer rounds.In our experiments $k\leq6$, $m=1$, we show a plot how complexity scales in \cref{fig:query_complexity}
\begin{figure}
    \centering
    \includegraphics[width=1\linewidth]{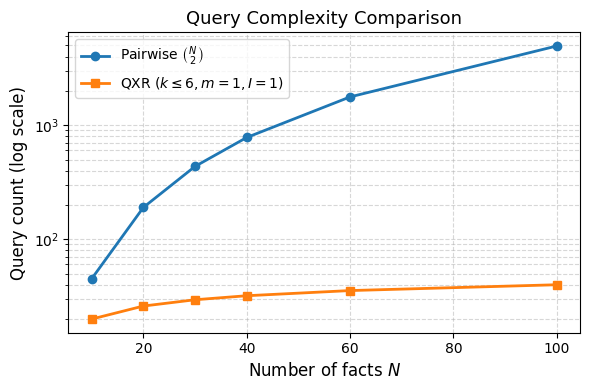}
    \caption{Scaling of query counts with number of facts $N$.
Pairwise checking is quadratic, while QXR scales polylogarithmically}
    \label{fig:query_complexity}
\end{figure}
We also illustrate common baselines and why they do not scale in \cref{tab:nli_baselines}.
\begin{table*}[h]
\centering
\small
\begin{tabular}{p{3.2cm}p{7.5cm}c}
\toprule
\textbf{Method} & \textbf{Description} & \textbf{Complexity} \\
\midrule
Pairwise NLI Graph (FEVER-style) & Run NLI on every pair of claims, removing any node involved in a contradiction edge. & \(O(N^2)\) \\
Transitive Closure / Entailment Graph & Build a full entailment--contradiction graph and perform reasoning (e.g., SAT solving or closure). & \(>O(N^2)\) \\
LLM-as-NLI & Prompt an LLM with two sentences (e.g., ``Does A contradict B?''). & \(O(N^2)\) \\
Multi-premise NLI & Treat the entire set of claims as premises and ask if they jointly entail a hypothesis. & \(O(1)\) \\
Clustered Pairwise & For each claim, compare only to its top-$K$ nearest neighbors (embedding-based). & \(O(NK)\) \\
\bottomrule
\end{tabular}
\caption{Common NLI-style baselines for consistency checking and their computational complexity. 
Pairwise and entailment-graph methods grow quadratically with the number of claims, making them infeasible for large clusters. Multi-premise NLI corresponds to our baseline.}
\label{tab:nli_baselines}
\end{table*}

\section{Synthetic Dataset}
\label{app:synth}
\paragraph{Entities and predicates.}
We sample entity names for \texttt{PERSON}, \texttt{ORG}, \texttt{LOC},\texttt{EVENT}, and \texttt{ANIMAL}. Facts are rendered from a small
predicate bank with negation support, including unary categories (\texttt{IsTiger}, \texttt{IsDog}, \texttt{IsActor}, \texttt{IsPolitician},
\texttt{IsAnimal}), binary relations (\texttt{WorksFor}, \texttt{LocatedIn}), and temporal precedence (\texttt{Before}). Numeric paraphrases (\texttt{AtLeastCases}, \texttt{AtMostCases}) provide clutter.

\paragraph{Planted MUS patterns.}
We inject one or more MUSes per instance:
\begin{itemize}
    \item size-2 contradiction $(A, \lnot A)$,
    \item size-3 temporal cycle,
    \item size-3 “exactly one’’ parity conflict consisting of a rule sentence
(\emph{Exactly one of X, Y, Z holds}) plus two of the three facts.
\end{itemize}

\paragraph{Distractors.}
We add on-topic true facts and a small fraction of off-topic false facts (e.g., \emph{does not work for}, \emph{is not located in}) to create realistic clutter.

\paragraph{Gold Annotations.}
For each instance we store the gold MUS family \(\mathcal{U}_{\text{gold}}\)as lists of sentence IDs. The gold consistent subset \(F_{\text{gold}}\) is
defined as the maximal consistent subset obtained by removing a minimum hitting set over \(\mathcal{U}_{\text{gold}}\) (greedy approximation).

\section{Prompting Analysis}
\label{app:prompt}

All main experiments in the paper use a zero-shot LLM-judge setting. To assess sensitivity to prompt design, we additionally evaluated the direct-judge baseline on VitaminC using several advanced prompting strategies. Across all prompt styles, we observe the same qualitative trend: high precision but substantially lower recall due to over-removal of consistent facts. These experiments were run on a subset of the dataset; for reference, we also restate the original zero-shot baseline and QXR results from the full evaluation.

\paragraph{Claude-3.7 (Sonnet)}
\begin{center}
\small
\begin{tabular}{lccc}
\toprule
Prompting Style & P & R & F1 \\
\midrule
Zero-shot & 0.986 & 0.681 & 0.797 \\
Chain-of-Thought & 0.973 & 0.684 & 0.792 \\
Decomposition & 0.987 & 0.626 & 0.755 \\
Few-shot & 0.975 & 0.658 & 0.775 \\
Self-consistency & 0.986 & 0.547 & 0.694 \\
\midrule
Original zero-shot baseline & 0.979 & 0.854 & 0.909 \\
Original QXR & 0.956 & 0.975 & 0.965 \\
\bottomrule
\end{tabular}
\end{center}

\paragraph{Claude-4 (Sonnet)}
\begin{center}
\small
\begin{tabular}{lccc}
\toprule
Prompting Style & P & R & F1 \\
\midrule
Zero-shot & 0.989 & 0.928 & 0.954 \\
Chain-of-Thought & 0.987 & 0.908 & 0.942 \\
Decomposition & 0.677 & 0.608 & 0.638 \\
Few-shot & 0.992 & 0.891 & 0.932 \\
Self-consistency & 0.992 & 0.768 & 0.860 \\
\midrule
Original zero-shot baseline & 0.956 & 0.877 & 0.913 \\
Original QXR & 0.938 & 0.983 & 0.960 \\
\bottomrule
\end{tabular}
\end{center}

\paragraph{DeepSeek-R1}
\begin{center}
\small
\begin{tabular}{lccc}
\toprule
Prompting Style & P & R & F1 \\
\midrule
Zero-shot & 0.993 & 0.860 & 0.910 \\
Chain-of-Thought & 0.996 & 0.850 & 0.907 \\
Decomposition & 0.996 & 0.882 & 0.930 \\
Few-shot & 0.882 & 0.761 & 0.813 \\
Self-consistency & 0.987 & 0.718 & 0.817 \\
\midrule
Original zero-shot baseline & 0.980 & 0.730 & 0.827 \\
Original QXR & 0.973 & 0.990 & 0.981 \\
\bottomrule
\end{tabular}
\end{center}

\paragraph{GPT-OSS-120B}
\begin{center}
\small
\begin{tabular}{lccc}
\toprule
Prompting Style & P & R & F1 \\
\midrule
Zero-shot & 0.994 & 0.876 & 0.920 \\
Chain-of-Thought & 0.995 & 0.851 & 0.903 \\
Decomposition & 0.954 & 0.816 & 0.865 \\
Few-shot & 0.997 & 0.896 & 0.932 \\
Self-consistency & 0.996 & 0.475 & 0.624 \\
\midrule
Original zero-shot baseline & 0.984 & 0.926 & 0.953 \\
Original QXR & 0.956 & 0.995 & 0.975 \\
\bottomrule
\end{tabular}
\end{center}

\paragraph{Mixtral-8$\times$7B}
\begin{center}
\small
\begin{tabular}{lccc}
\toprule
Prompting Style & P & R & F1 \\
\midrule
Zero-shot & 0.947 & 0.568 & 0.701 \\
Chain-of-Thought & 0.997 & 0.570 & 0.700 \\
Decomposition & 0.991 & 0.535 & 0.673 \\
Few-shot & 0.238 & 0.142 & 0.175 \\
Self-consistency & 0.990 & 0.522 & 0.674 \\
\midrule
Original zero-shot baseline & 0.955 & 0.603 & 0.724 \\
Original QXR & 0.968 & 0.978 & 0.972 \\
\bottomrule
\end{tabular}
\end{center}

\section{Prompts}
\label{app:prompts}

This appendix lists the exact prompts used in our experiments. All prompts were intentionally kept simple and symmetric across methods to isolate algorithmic effects rather than prompt engineering. For all baselines that output a subset of facts, models are required to return the \emph{full text} of each retained fact (not indices) as a Python list enclosed in an \texttt{<answer>} tag.

\paragraph{Subset-consistency oracle (QXR).}
Given a queried subset of facts (optionally with a background set $B$), we ask the LLM to judge whether all statements can be true simultaneously:
\begin{quote}\small
\texttt{Factual statements. Some may contradict.}\\
\\
\texttt{\{bg\}}%
\texttt{Statements:}\\
\texttt{\{facts\_block\}}\\
\\
\texttt{Respond ONLY:}\\
\texttt{- CONSISTENT}\\
\texttt{- INCONSISTENT}\\
\\
\texttt{Answer:}
\end{quote}

\paragraph{Direct baseline (zero-shot).}
The LLM is asked to return a mutually consistent subset of facts:
\begin{quote}\small
\texttt{Given the following factual statements, some may contradict.}\\
\texttt{Return a Python list of facts that are mutually consistent, meaning all returned facts can be true at the same time.}\\
\\
\texttt{Facts:}\\
\texttt{\{facts\_block\}}\\
\\
\texttt{CRITICAL: Return ONLY a Python list using the FULL TEXT of each fact.}\\
\texttt{<answer>["fact1", "fact2", ...]</answer>}
\end{quote}

\paragraph{Direct baseline (Chain-of-Thought prompting).}
We encourage structured reasoning while constraining the output format:
\begin{quote}\small
\texttt{Given the following facts, some may contradict. Find all mutually consistent facts.}\\
\\
\texttt{Facts:}\\
\texttt{\{facts\_block\}}\\
\\
\texttt{Think step-by-step:}\\
\texttt{1. Identify pairs of facts that contradict each other.}\\
\texttt{2. For each contradiction, decide which fact to keep.}\\
\texttt{3. Return the consistent subset.}\\
\\
\texttt{CRITICAL: In the <answer> tag, return the FULL TEXT of each fact, NOT numbers.}\\
\texttt{Example: <answer>["The sky is blue", "Grass is green"]</answer>}\\
\\
\end{quote}

\paragraph{Direct baseline (Decomposition prompting).}
The task is decomposed into detection and resolution:
\begin{quote}\small
\texttt{Task: Select a mutually consistent subset of the following facts.}\\
\\
\texttt{Facts:}\\
\texttt{\{facts\_block\}}\\
\\
\texttt{Step 1 -- Identify contradicting facts.}\\
\texttt{Step 2 -- Decide which facts to keep.}\\
\texttt{Step 3 -- Output the consistent subset.}\\
\\
\texttt{CRITICAL: Return ONLY a Python list with the FULL TEXT of each fact.}\\
\texttt{<answer>["full fact text 1", "full fact text 2", ...]</answer>}
\end{quote}

\paragraph{Direct baseline (Few-shot prompting).}
We provide two illustrative examples followed by the target instance:
\begin{quote}\small
\texttt{Given facts that may contradict, return the subset that is mutually consistent.}\\
\\
\texttt{Example 1:}\\
\texttt{- The sky is blue}\\
\texttt{- The sky is red}\\
\texttt{- Grass is green}\\
\texttt{<answer>["The sky is blue", "Grass is green"]</answer>}\\
\\
\texttt{Example 2:}\\
\texttt{- Paris is in France}\\
\texttt{- Paris has 2 million people}\\
\texttt{- Paris has 10 million people}\\
\texttt{<answer>["Paris is in France", "Paris has 2 million people"]</answer>}\\
\\
\texttt{Now solve:}\\
\texttt{Facts:}\\
\texttt{\{facts\_block\}}\\
\\
\texttt{<answer>}
\end{quote}

\paragraph{Direct baseline (Self-consistency prompting).}
We sample multiple outputs using the same prompt and aggregate by majority vote over selected facts:
\begin{quote}\small
\texttt{Given the following facts, some may contradict.}\\
\texttt{Return ONLY a Python list of facts that are mutually consistent.}\\
\\
\texttt{Facts:}\\
\texttt{\{facts\_block\}}\\
\\
\texttt{<answer>["fact1", "fact2", ...]</answer>}
\end{quote}

\end{document}